\documentclass[letterpaper, 10 pt, conference]{ieeeconf}
\IEEEoverridecommandlockouts                              
\usepackage[T1]{fontenc}

\usepackage{cite}
\usepackage{color}
\let\labelindent\relax
\usepackage{amsmath,amssymb,amsfonts, enumitem}
\usepackage{bbm, dsfont}
\usepackage{algorithm, algpseudocode}
\usepackage{graphicx}
\usepackage{textcomp}
\usepackage{mathtools}
\usepackage{subcaption}

\newtheorem{lemma}{Lemma}
\newtheorem{theorem}{Theorem}

\newtheorem{assumption}{Assumption}
\newtheorem{remark}{Remark}

\newtheorem{prop}{Proposition}

\usepackage[textsize=tiny]{todonotes}
 \usepackage{makecell,booktabs}

\usepackage{setspace}

\DeclareMathAlphabet{\pazocal}{OMS}{zplm}{m}{n}

\newcommand\scalemath[2]{\scalebox{#1}{\mbox{\ensuremath{\displaystyle #2}}}}

\renewcommand{\S}{\pazocal{S}}
\newcommand{\A}{\pazocal{A}}

\newcommand{\R}{\pazocal{R}}
\newcommand{\M}{\pazocal{M}}
\newcommand{\bP}{\mathbb{P}}
\newcommand{\bR}{\mathbb{R}}
\newcommand{\Bstar}{B^\star}
\newcommand{\Wstar}{W^\star}
\newcommand{\wstar}{w^\star}
\newcommand{\hB}{\widehat{B}}

\newcommand{\thetas}{\theta^{\star}}
\newcommand{\Thetas}{\Theta^{\star}}
\newcommand{\wTheta}{\widehat{\Theta}}
\newcommand{\sigmin}{\sigma_{\min}}
\newcommand{\sigmax}{\sigma_{\max}}

\newcommand{\SD}{{\text{SD}}}

\newcommand{\nn}{\nonumber}

\begin{document}
\title{Provable Multi-Task Reinforcement Learning: A Representation Learning Framework with Low Rank Rewards
\thanks{Y. Guo and S. Moothedath are with Electrical and Computer Engineering, Iowa State University. Email: \{yaozeguo, mshana\}@iastate.edu. }
\thanks{This work was supported by the U.S. National
Science Foundation under Grant NSF CAREER 2440455.}}
\author{Yaoze Guo and Shana Moothedath~\IEEEmembership{Senior Member,~IEEE}}

\maketitle

%%%%%%%%%%%%%%%%%%%%%%%%%%%%%%%%%%%%%%%%%%%%%%%%%%%%%%%%%%%%%%%%%%%%%%%%%%%%%%%%
\begin{abstract}
Multi-task representation learning (MTRL) is an approach that learns shared latent representations across related tasks, facilitating collaborative learning that improves the overall learning efficiency. This paper studies MTRL for multi-task reinforcement learning (RL), where multiple tasks have the same
state-action space and transition probabilities, but different rewards.
 We consider $T$ linear Markov Decision Processes (MDPs) where the reward functions and transition dynamics admit linear feature embeddings of dimension $d$. The relatedness among the tasks is captured by a low-rank structure on the reward matrices. 
Learning shared representations across multiple RL tasks is challenging due to the complex and policy-dependent nature of data that leads to a temporal progression of error. 
% {\cblue We consider $T$ MDP tasks that share the same state, action spaces, and probability dynamics, however, different reward functions. We consider a linear MDP setting where the reward functions and transition dynamics admit linear feature embeddings of dimension $d$. The relatedness among the tasks are captured by a low-rank structure.}
Our approach adopts a reward‑free reinforcement learning framework to first learn a data‑collection policy. This policy then informs an exploration strategy for estimating the unknown reward matrices. Importantly, the data collected under this well-designed policy enable accurate estimation, which ultimately supports the learning of an $\epsilon$-optimal policy.
%Our approach combines spectral initialization with task-wise least squares estimation to jointly learn a shared representation and task-specific parameters. 
Unlike existing approaches that rely on restrictive assumptions such as Gaussian features, incoherence conditions, or access to optimal solutions, we propose a low-rank matrix estimation method that operates under more general feature distributions encountered in RL settings. 
Theoretical analysis establishes that accurate low-rank matrix recovery is achievable under these relaxed assumptions, and we characterize the relationship between representation error and sample complexity. Leveraging the learned representation, we construct near-optimal policies and prove a regret bound. 
Experimental results demonstrate that our method effectively learns robust shared representations and task dynamics from finite data.
\end{abstract}
%%%%%%%%%%%%%%%%%%%%%%%%%%%%%%%%%%%%%%%%%%%%%%%%%%%%%%%%%%%%%%%%%%%%%%%%%%%%%%%%
\section{Introduction}
Markov Decision Processes (MDPs) are a widely used mathematical framework for modeling sequential decision-making in dynamical systems and for learning optimal control policies via reinforcement learning (RL) \cite{sutton1998reinforcement,szepesvari2022algorithms}. In many control applications, such as robotics, autonomous systems, and industrial automation, multiple agents interact among themselves and the environment to accomplish complex, coordinated tasks. 
These tasks are often related yet distinct, sharing underlying structure while differing in specifics.
Consider a fleet of autonomous robots in a shared environment, where each robot or mission defines a task. The system dynamics are shared, and actions impact future states. While all tasks aim for common objectives, energy efficiency, safety, and task completion, they weight them differently, resulting in distinct, however related, reward functions \cite{caruana1997multitask}.
{\em We ask: Should tasks be learned independently, or can we exploit the shared structure through joint learning, leading to improved efficiency and generalization?}

Multi-task  representation learning provides a principled approach to answer this question affirmatively by learning shared representations that capture the common structure among tasks \cite{du2020few, tripuraneni2021provable}. In this paper, we study multi-task representation learning for linear MDPs, where the reward matrix for the $T$ tasks/users shares a latent low-dimensional structure. We propose a method that jointly learns the shared representation and task-specific components, and we provide theoretical guarantees on its performance, highlighting the relationship between representation error and regret. 

Unlike in supervised or bandit settings, where feature vectors can be freely designed or assumed to be well-conditioned, in RL settings, data are collected through sequential interaction with the environment, and feature distributions highly depend on the underlying policy and system dynamics. Most work in low rank matrix recovery and representation learning relies on samples from standard Gaussian distributions \cite{nayer2022fast,tripuraneni2021provable,linprovably}, or assumes access to the optimal solution \cite{nagaraj2023multi,hu2021near,yang2020impact, du2020few}. In contrast, we consider a more general and realistic setting in which the sample distribution may be different from these idealized assumptions. These relaxations make our results applicable to more practical scenarios, for example, in reinforcement learning environments where idealized conditions rarely hold.

Our main contributions are as follows.
\begin{itemize}
    \item We propose a multi-task representation learning framework for reinforcement learning (MTRL-RL) that collaboratively explores rewards across $T$ task-specific MDPs and learns the reward matrices via a low-rank estimation approach with convergence guarantees.
    \item We establish theoretical guarantees showing that low rank matrix recovery remains possible despite the absence of standard assumptions, such as incoherence and Gaussian features, which demonstrate that low rank matrix estimation can still succeed in realistic reinforcement learning environments. 
    \item We combine the learned representation with the system dynamics to construct $\epsilon$-optimal policies and establish a regret bound of $\mathcal{O}(NTH\sqrt{d}\delta_0)$, where $H$ denotes the horizon length and $N$ the number of rounds.
    \item We empirically validated the proposed approach on two MDP environments. Our analysis demonstrates that our proposed algorithm accurately recovers the reward matrices and substantially outperforms baseline methods in both estimation accuracy and regret.
    %Applying these learned parameters, we can construct a $\epsilon$-optimal policy.
\end{itemize}
\noindent{\bf Organization:} The remainder of the paper is organized as follows.
In Section~\ref{sec:rel}, we present the related work.
In Section~\ref{sec:prob}, we present the problem setting. In Section~\ref{sec: Proposed Algorithm}, we present the proposed algorithm. In Section~\ref{sec:analysis}, we present the theoretical analysis and convergence results. In Section~\ref{sec: Simulations}, we present the simulation results. In Section~\ref{sec:Conclusion}, we present the conclusions and potential directions for future work. 
%
% \begin{table*}[h]
% \caption{}
% \label{table: complexities}
% \centering
% \resizebox{0.99\linewidth}{!}{
% \begin{tabular}{|l|c|c|c|c|c|}
% \toprule
% \textbf{Method} 
% & \textbf{Sample Complexity}
% & \textbf{Regret}
% & \textbf{Assumptions} 
% & \textbf{Column-wise error bound}\\
% \midrule
%
% \textbf{Nayer et al}
% & $dr^2\log(\frac{1}{\delta_0})$
% & N/A
% & Incoherence, Standard Gaussian 
% & $\delta_0\|x^*\|$\\
% \midrule
%
% \textbf{Lin et al }
% & $\frac{dr^2}{\delta_0^2}$
% & $\sigma_{\max}^\star\sqrt{rNT}\delta_0$
% & Incoherence, Standard Gaussian
% & $\sqrt{\frac{r}{T}}\delta_0\sigma^*_{\text{max}}$\\
% \midrule
%
% \textbf{Nagaraj et al}
% & $r(T+d)\log T$
% & N/A
% & Optimal solution is available
% & $0$ \\
% \midrule
 %
% \textbf{Proposed*}
% & $\frac{r(T+d)d^2}{\delta_0^2}$
% & $NTH\sqrt{d}\delta_0$
% & Proposition~\ref{prop:cond4}
% & $\delta_0\sqrt{d}$\\
% \bottomrule
% \end{tabular}
% }
% \end{table*}
%
\section{Related Work}\label{sec:rel}

\noindent{\bf Multi-task reinforcement learning:}
Multi‑task reinforcement learning considers a setting in which an agent jointly learns to solve multiple related reinforcement‑learning tasks.  The agent encounters a collection of MDPs that share common structure, such as operating within the same environment while pursuing different objectives, or performing similar objectives. This paradigm has been studied extensively from both empirical \cite{hessel2019multi,sodhani2021multi} and theoretical perspectives \cite{brunskill2013sample, teh2017distral}. 
%
% Multi-task reinforcement learning has been widely studied. A variety of methods have been proposed to quantify task similarity in multi-task reinforcement learning. \cite{teh2017distral} assumed policies share common components to capture common behaviors. Actor-critic methods have been extended to multi-task settings by sharing lower-layer neural network parameters while keeping task dependent output heads \cite{zhang2021multi,reverdy2016satisficing}. Additionally, recent studies investigate experience sharing and attention mechanisms within actor-critic frameworks in multi-task/agent settings \cite{christianos2020shared,iqbal2019actor}. 
The success of these approaches highlights the importance of learning shared knowledge and task-specific adaptation in multi-task reinforcement learning.

% \cite{borsa2016learning} \cite{ishfaq2024offline} and \cite{cheng2022provable} assume common features for transition dynamics among different tasks. 

% \cite{nagaraj2023multi} and \cite{bai2025accelerating} assume the reward matrix has rank $r < \min\{|S|,N\}$, where $N$ is the number of tasks. \cite{bai2025accelerating} applied Singular Value Decomposition (SVD) technique in TD learning while \cite{nagaraj2023multi} applied Low Rank Matrix Completion (LRMC) to estimate the reward matrix. 
\noindent{\bf Low-rank matrix recovery:} Low-rank matrix estimation has been extensively studied in statistics and machine learning. Classic approaches include nuclear norm minimization \cite{candes2012exact,wainwright2019high} and spectral methods \cite{linprovably,nayer2022fast,tripuraneni2021provable}, which achieve strong guarantees under assumptions such as incoherence and random samples are drawn from a standard Gaussian distribution. However, such assumptions are often unrealistic in many real-world scenarios, such as RL environments, where feature vectors are inherently policy-dependent. Our work analyzes low rank matrix recovery under more general feature distributions that arise in RL settings.

\noindent{\bf Low-rank reinforcement learning:} Low-rank reinforcement learning has gathered significant recent interest.  
Existing work typically imposes a low-rank structure on either the transition dynamics or the reward function. 
% \cite{nagaraj2023multi} assume the reward matrix has low rank in their tabular setting. 
% and \cite{bai2025accelerating} assume the reward matrix has low rank. 
\cite{cheng2022provable} and \cite{ishfaq2024offline} assume a low rank structure in transition dynamics, while \cite{bai2025accelerating} assumes a low rank structure in the value functions across all tasks. \cite{jin2023provably} assumes that both the transition dynamics and reward function are linear. 
Our work builds upon the multi-user RL framework with low-rank rewards introduced by \cite{nagaraj2023multi}, which serves as a foundational basis for our approach.
\section{Problem Formulation}\label{sec:prob}
\subsection{Notations}
We use $\|\cdot\|$ to denote the Euclidean norm, $\|\cdot\|_1$ to denote the $\ell_1$ norm, $\|\cdot\|_\text{F}$ to denote the Frobenius norm, $\|\cdot\|_{\psi_2}$ to denote the sub-Gaussian norm, and $\|\cdot\|_{\psi_1}$ to denote the sub-exponential norm. We use $e_1,...,e_m$ to denote the standard basis vectors of $\mathbb{R}^m$, for some $m \in \mathbb{N}$. Let $S^{d-1} := \{x \in \mathbb{R}^d:||x||=1\}$ and $\mathcal{B}_d(r):= \{x \in \mathbb{R}^d:||x||\leqslant r\}$. $c$ and $C$ are positive constants with $c < 1$ and $C > 1$ respectively. The subspace distance between two orthonormal matrices $B_1,B_2$ is denoted as $\text{SD}(B_1, B_2) = \|(I-B_1B_1^\top)B_2\|$. 

\subsection{Multi-Task Reinforcement Learning}
We consider a multi-task reinforcement learning problem. Each task $t \in [T]$ is associated with a finite horizon episodic MDP that shares the same state space $\S$, action space $\A$, horizon $H$, and transition kernel $\{\mathbb{P}_h\}_{h=1}^H$, where $\mathbb{P}_h(\cdot|s,a)$ gives the distribution of the state in step $h+1$ if action $a\in\A$ is taken for state $s\in\S$ in step $h$. Each task has a different deterministic reward denoted by $R_t = (R_{1t},...,R_{Ht})$, where $R_{ht}:\S\times \A \rightarrow[0,1]$ denotes the reward for task $t$ in step $h$ for selecting an action $a \in \A$ in state $s\in \S$.  We denote the MDP associated with task $t\in [T]$ by a tuple $\M_t=(\S, \A, \{R_{ht}\}_{h=1}^H, \{\bP_h\}_{h=1}^H)$. 

We consider all episodic MDPs start at a random state $s_1$ with the same distribution for all $t \in [T]$. 
A policy $\Pi:=(\pi_1, \ldots, \pi_H)$, where $\pi_h: \mathcal{S} \to \Delta(\A)$,  is a mapping from states to actions.
Then, at each $h\in [H]$, we observe a state $s_h \in \S$, choose an action $a_h \in \A$ according to its policy $\pi_h$ and receive a reward $R_{ht}(s_h,a_h)$. Further, the MDP evolves into a new state $s_{h+1}\sim \bP_h(\cdot|s_h, a_h)$. Note that we cannot take action at step $H+1$, hence receive no rewards. 

The value function $V_t^\Pi: \mathcal{S} \to \mathbb{R}$ is defined as the expected cumulative reward obtained by following policy $\Pi$ starting from the state $s$ at step $h$ for task $t$. Formally, for all $s_h \in \mathcal{S},\; h \in [H]$, we have
\[
V_t^\Pi(s_h) = \mathbb{E}_\Pi\left[\sum_{h'=h}^{H} R_{h't}(s_{h'}, \pi_{h'}(s_{h'})) \mid s_h \right].
\]
Similarly, the action-value function $Q_t^\Pi: \mathcal{S} \times \mathcal{A} \to \mathbb{R}$ represents the expected cumulative reward when the agent starts from state $s_h$, takes action $a_h$ at step $h$ for task $t$, and then follows policy $\Pi$ afterward. For all $(s_h,a_h) \in \mathcal{S} \times \mathcal{A},\; h \in [H],$
\begin{align*}
    Q_t^\Pi(s_h,a_h) &= R_{ht}(s_h,a_h) \\&+ \mathbb{E}_\Pi\left[\sum_{h'=h+1}^{H} R_{h't}(s_{h'}, \pi_{h'}(s_{h'})) \mid s_h, a_h \right],
\end{align*}
Let $V^\star_t(s_h^n)$ be the optimal value function of task $t$ for a state $s$ at horizon $h$ and episode $n$ under optimal policy $\Pi^\star(t)$. $\Pi\epsilon(t)$ is called an $\epsilon$-optimal policy for task $t$ if $V^{\Pi\epsilon(t)}_t(s_1) \geq V^\star_t(s_1) - \epsilon$. Let $\hat{\Pi}^{\star}(t)$ be the optimal policy for task $t$ induced by the estimated rewards. 
For simplicity, we assume the state space and action space are finite sets, and rewards are deterministic. 
\subsection{Linear MDP}
We consider a linear MDP setting where the reward functions and transition dynamics admit linear feature embeddings. We consider embedding feature $\psi : \S \times \A \rightarrow \mathbb{R}^d$ and $\phi : \S \times \A \rightarrow \mathbb{R}^d$ such that
\begin{enumerate}
    \item $||\psi(s,a)|| \leqslant 1$, $||\phi(s,a)||_1 \leqslant 1$
    \item There exist reward parameters $\thetas_{ht} \in \mathbb{R}^d$, $||\thetas_{ht}|| \leqslant \sqrt{d}$ such that $R_{ht}(s,a) = \langle\thetas_{ht},\psi(s,a)\rangle$ and $R_{ht}(s,a) \in [0,1]$.
    \item There exist signed measures $\mu_h^1, \ldots, \mu_h^d$ over the space $\mathcal{S}$ such that
\(
\bP_h(\cdot \mid s,a) = \sum_{i=1}^{d} \mu_h^i(\cdot)\,\langle \phi(s,a), e_i \rangle.
\)
\end{enumerate}
We denote the $T \times d$ matrix whose $t$-th row is $\thetas_{ht}{}^\top$ to be $\Thetas_h$. We assume a low-rank structure on $\Thetas_h$.
\begin{assumption}
\label{low rank}
    The $T \times d$ matrix $\Thetas_h$ has rank $r \leqslant \frac{1}{2}\min(T,d)$. 
\end{assumption}
Let \(\Thetas_h{}^\top \overset{\text{SVD}}{=} \Bstar_h\Sigma_h^{\star}D_h^{\star} := B_h^{\star}(\Sigma_h^{\star}D^{\star}_h) := B_h^{\star}W_h^{\star}\) denote the reduced (rank $r$) Singular Value Decomposition (SVD) of $\Thetas_h{}^\top$, where $\Wstar_h := \Sigma_h^{\star}D^{\star}_h$. Denote $t$-th column of $\Wstar_h$ as $\wstar_{ht}$. Then we have $\thetas_{ht} = \Bstar_h\wstar_{ht}$, for $h\in [H]$ and $t\in [T]$. Let $\sigmin^\star(h)$ and $\sigmax^\star(h)$ be the minimum and maximum singular value of $\Thetas_h$ for all $h\in[H]$, respectively.
\subsection{Learning Objective}
The goal is to estimate the reward parameter $\thetas_{ht}$ and learn an $\epsilon$-optimal policy $\hat{\Pi}^{\star}(t)$, for all task $t \in [T]$ and $h \in [H]$. 
% Since we only consider a finite horizon problem, $V^\pi_t(s_{H+1}^n)=0,\hat{V}^\pi_t(s_{H+1}^n) = 0, \forall \pi, n \in [N], t\in[T]$.
% Let $\pi^{\star}$ be the optimal policy. Let $\hat{\pi}^{\star}$ be the optimal policy calculated by the estimated matrix $\hat{\Theta}_h$. For simplicity, we assume both $\pi^{\star}$ and $\hat{\pi}^{\star}$ are deterministic policy. Let $a_h^n$ be the action taken by policy $\pi^{\star}$ at horizon $h$ and episode $n$. Let $\hat{a}_h^n$ be the action taken by $\hat{\pi}^{\star}$ at horizon $h$ and episode $n$. 
%
The goal of representation learning is to obtain the estimates
\[
\widehat{\Theta}_h^\top = \widehat{B}_h\widehat{W}_h, \text{~for~all~} h \in [H]
\]
with the goal of minimizing the cost function $f(\widehat{B}_h,\widehat{W}_h)$,
\begin{equation}\label{eq:est}
f(\widehat{B}_h,\widehat{W}_h) =
\sum_{k=1}^{K}\sum_{t=1}^{T}
\left\|y_{tk} - \psi_{tk}^\top \widehat{B}_h \widehat{w}_{ht} \right\|^2, \text{~for~all~} h\in [H],
\end{equation}
where $\widehat{B}_h \in \mathbb{R}^{d \times r}$, $\widehat{W}_h =[\widehat{w}_{h1}, \ldots, \widehat{w}_{hT}] \in \mathbb{R}^{r \times T}$, and $K$ is the number of data samples.
We are also interested in obtaining the regret guarantee over $N$ episodes, given by
\begin{equation}
\text{Reg}(N,T) = \sum_{n=1}^{N} \sum_{t=1}^{T} V^{\star}_t(s_1^n) - V_t^{\hat{\Pi}^{\star}(t)}(s_1^n).\label{eq:reg}
\end{equation}
\section{Multi-Task Representation Learning for Reinforcement Learning (MTRL-RL)}\label{sec: Proposed Algorithm}
%
 %In this section, we propose a multi-task representation learning algorithm for reinforcement learning  (MTRL-RL) to minimize Eqs.~\eqref{eq:reg} and~\eqref{eq:est}. 
 The pseudocode of the MTRL-RL algorithm is presented in Algorithm~\ref{alg:whole}. 
 MTRL-RL proceeds in four stages. In stage~1, we perform a reward free RL where each task explores the shared environment without access to the rewards.
 %The rewards are revealed at the end of the exploration and the RL goal is to learn a near-optimal policy for that reward function.
%Our algorithm proceeds in 4 stages. 
%In the first stage, we perform reward-free reinforcement learning.
Reward‑free reinforcement learning algorithms with strong theoretical performance guarantees can be leveraged in this stage \cite{zhang2020nearly,wagenmaker2022reward}. In reward free RL, we want to explore the MDP such that we can obtain the optimal policy for every possible reward. After collecting sufficient amount of trajectories from MDP sequentially, the algorithm outputs two functions $\check{\Pi}$ and $\check{V}$, where given any reward $\mathcal{R}_t$, $\check{\Pi}(\mathcal{R}_t)$ outputs an optimal policy and $\check{V}(\check{\Pi}(\mathcal{R}_t))$ outputs the optimal value function for this reward. Since all tasks share the same reward free MDP, we select trajectories from random tasks. After completing stage~1, we proceed to stage~2.
%the unknowns will be the rewards for the different tasks. 

Stage 2 consists of two steps that follow the methods of \cite{nagaraj2023multi} to compute an exploration policy. In step 1, obtain $M$ samples $(s_{mh},a_{mh})$ such that $G_{h} := \sum_{m=1}^{M} \phi(s_{mh},a_{mh})\phi(s_{mh},a_{mh})^\top \succeq I$ for all $h \in [H]$. It can be done by Algorithm 2 in \cite{nagaraj2023multi}, which uses $\check{\Pi}(\cdot)$ obtained in stage 1. The outputs of this step are $\{\phi_{mh}:=\phi(s_{mh},a_{mh})\}_{m\in[M],h\in[H]}$, $\{s_{mh}\}_{m\in[M],h\in[H]}$ and $\{G_h\}_{h\in[H]}$. In step 2, the exploration policy $\hat{\Pi} := (\hat{\pi}_1,\cdots,\hat{\pi}_H)$ is constructed by solving the following optimization problem. Let $\xi > 0$ be a constant set manually.
     \begin{align}
         \hat{\pi}_1 &= \arg \sup_{\hat{\pi}_1} \inf_{x\in S^{d-1}} \hat{\mathcal{T}}_1(f,\hat{\pi}_1),
     \label{eqs:pi1}\\
         \hat{\pi}_h &= \arg \sup_{\hat{\pi}_h,\nu\in \mathcal{B}_d(1)} \inf_{x\in S^{d-1}} \hat{\mathcal{T}}_h(f,\nu,\hat{\pi}_h), \text{for } h>1      \label{eqs:pih}
     \end{align} 
     where \[f(s,a,x) := |\langle x,\psi(s,a)\rangle|\sqrt{d} - \xi d\langle x,\psi(s,a)\rangle^2,\] \[\hat{\mathcal{T}}_1(f,\hat{\pi}_1) = \frac{1}{M}\sum_{m=1}^{M}\sum_{ a\in\A}f(s_{m1},a,x)\hat{\pi}_1(a|s_{m1}),\] \[\hat{\mathcal{T}}_h(f,\nu,\hat{\pi}_h) = \frac{1}{M}\sum_{m=1}^{M}\phi_{m(h-1)}^\top G_{h-1}^{-1}\nu\Bigl(\sum_{a\in \A}f(s_{mh},a,x)\hat{\pi}_h(a|s_{mh})\Bigr).\]
%
% In stage 2, we leverage the transition embedding obtained from stage 1 to design collaborative exploration policies $\hat{\Pi}$. Let $S_{1:H},A_{1:H}  \sim  \M(\pi)$ be the random trajectory under policy $\pi$. 
%
% We use $\hat{\Pi}$ in the next stage to collect samples of $\psi(s_h,a_h)$, which is done by running the MDP with policy $\pi$. Let $S_{1:H},A_{1:H}  \sim  \M(\pi)$ be the random trajectory under policy $\pi$.

In stage~3, we perform matrix estimation to recover the low-rank reward matrix $\Thetas_h$ for all $h\in[H]$ using samples collected by running MDPs under policy $\hat{\Pi}$ learned in stage 2 for all tasks. Finally, in stage 4, we use the estimated reward, computed using the reward parameters obtained in stage 3, $\hat{R}_{ht}(s,a) = \langle\hat{\theta}_{ht},\psi(s,a)\rangle$, and the $\check{\Pi}(\cdot)$ learned in stage 1 to compute an optimal policy for each task.
%
% We collect samples of form $(\psi, \langle\theta^*_{ht},\psi\rangle)$ for all $t\in[T]$ and $h\in[T]$ by running MDP under policy $\hat{\Pi}$ constructed in stage 2 for each task $K$ times, where $K$ is the number of samples.

\begin{algorithm}[t]
    \caption{Multi-task Linear Reinforcement Learning}
    \label{alg:whole}
\begin{algorithmic}[1]
     %\State {\bfseries Input:} {\cblue number of samples $K$, $\phi(s,a), \psi(s,a)$, for all $s \in \S, a \in \S$ }
     % \State {\bfseries Output:} $\epsilon$-optimal policy $\hat{\Pi}^\star$
     \Statex \textbf{\textsc{Stage 1: Reward free RL}}
     \State  Randomly sample MDPs and perform reward free reinforcement learning. Solve for $\check{\Pi}(\cdot)$ and $\check{V}(\cdot)$.
     \Statex \textbf{\textsc{Stage 2: Construct exploration policy $\hat{\Pi}$}}
     \State Collect features $\{\phi_{mh}\}_{h\in[H],m\in[M]}$ using $\check{\Pi}(\cdot)$ from stage 1 and construct policy $\hat{\Pi}=(\hat{\pi}_1,\cdots,\hat{\pi}_H)$ by solving Eqs.~\eqref{eqs:pi1} and~\eqref{eqs:pih}.
     % \State \hspace{1em} stage 2 inputs: $\check{\Pi}(\cdot)$, $M$.
     % \State \hspace{1em} stage 2 output: $\hat{\Pi}$.
     % \State Run MDP under policy $\hat{\Pi}$ for each task $K$ times. Observe all $(s,a)$, $\psi(s,a)$ and corresponding reward $y$.
    \Statex \textbf{\textsc{Stage 3: Reward matrix estimation}}
    \For{each $t\in[T]$}
        \For{each $k=1:K$}
            \State \hspace{-1em} Run MDP under policy $\hat{\Pi}$. At each $h\in[H]$, observe $\psi_{tk}(s_h,a_h)$ and $y_{tk}(h) := \langle\thetas_{ht},\psi_{tk}(s_h,a_h)\rangle$.
        \EndFor
    \EndFor
     \For{each $h \in[H]$}
     \State For $t\in[T]$, construct $Y_{ht} = [y_{t1}(h),\ldots, y_{tK}(h)]^\top$, $\Psi_{ht} = [\psi_{t1}(s_h,a_h),\ldots,\psi_{tK}(s_h,a_h)]^\top$, $Y_{ht} \in \bR^K$ and $\Psi_{ht} \in \bR^{K\times d}$
     %\State {\bfseries Spectral Initialization}
     \State $\widehat{\Theta}_0^\top(h):= \frac{1}{K}\sum^{T}_{t=1}\Psi_{ht}^\top Y_{ht}e_t^\top$ 
     \State Set $\hB_h$ to be top $r$ singular vectors of $\wTheta^\top_0(h)$
    \State  Update $\widehat{w}_{ht}$, $\hat{\theta}_{ht}$: For each $t \in [T]$, $\widehat{w}_{ht} = (\Psi_{ht}\hB_h)^\dagger Y_{ht}$, $\hat{\theta}_{ht} = \hB_h\widehat{w}_{ht}$
        % \State Low rank matrix estimation (Algorithm \ref{alg:init})
        \State Set $\widehat{\Theta}_h =[\hat{\theta}_{h1}, \ldots, \hat{\theta}_{hT}]^\top$ for all $h\in [H]$
     \EndFor
      \Statex \textbf{\textsc{Stage 4: Construct $\epsilon$-optimal policy}}
     \State Construct $\hat{\mathcal{R}}_{t}=\{\hat{R}_{ht}\}_{h=1}^H$ for all $t\in[T]$ using $\hat{\theta}_{ht}$, for all $h\in[H]$, and $\psi(s,a)$ for all $s\in\S,a\in\A$
     \State Construct the $\epsilon$-optimal policy $\hat{\Pi}^\star(t) = \check{\Pi}(\hat{\mathcal{R}}_t)$ for each task $t\in [T]$
    \State {\bfseries Output:} $\epsilon$-optimal policy $\hat{\Pi}^\star$ 
\end{algorithmic}
\end{algorithm}
\section{Theoretical Analysis of MTRL-RL}\label{sec:analysis}
\subsection{Preliminaries}
In this subsection, we present the preliminary results used in the analysis of the proposed approach. 
\begin{prop}
\label{prop:wedin}
(Wedin sin$\Theta$ Theorem) \cite{chen2021spectral}. For two matrices $M^\star, M \in \mathbb{R}^{n_1 \times n_2}$, let
$B^\star, B \in \mathbb{R}^{n_1 \times r}$ denote the matrices containing
their top-$r$ left singular vectors, and let
$V^{\star}, V \in \mathbb{R}^{r \times n_2}$ denote the matrices
containing their right singular vectors.
Let $\sigma_r^\star$ and $\sigma_{r+1}^\star$ denote the $r$-th and
$(r+1)$-th singular values of $M^\star$.
If
\(
\| M - M^\star \| \le \sigma_r^\star - \sigma_{r+1}^\star,
\)
then the subspace distance satisfies
\[
\mathrm{SD}(B, B^\star)\le\hspace{-2 mm}\
\frac{
\sqrt{2}
\max\left(
\| (M - M^\star)^\top B^\star \|,
\| (M - M^\star) V^{\star\top} \|
\right)
}{
\sigma_r^\star - \sigma_{r+1}^\star - \| M - M^\star \|
}
\]
Furthermore, if \(\|M-M^{\star}\| \leqslant (1-1/\sqrt{2})(\sigma_r^\star - \sigma_{r+1}^\star)\), then
\[
\mathrm{SD}(B, B^\star)
\;\le\;
\frac{
2\| M - M^\star \|
}{
\sigma_r^\star - \sigma_{r+1}^\star
}.
\]
\end{prop}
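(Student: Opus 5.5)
This is the classical Wedin $\sin\Theta$ theorem, so I would prove it along the standard route in \cite{chen2021spectral}: relate the subspace distance to residuals of the perturbed singular subspaces through a Sylvester-type equation. Write $\Delta := M - M^\star$. Write the full SVD $M = B\Sigma V + B_\perp \Sigma_\perp V_\perp$, where $\Sigma$ holds the top-$r$ singular values of $M$ and $\Sigma_\perp$ the rest. Similarly write $M^\star = B^\star\Sigma^\star V^\star + B^\star_\perp\Sigma^\star_\perp V^\star_\perp$. Note that $\mathrm{SD}(B,B^\star)=\|(I-BB^\top)B^\star\| = \|B_\perp^\top B^\star\|$. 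Likewise the right-subspace distance is $\|V_\perp V^{\star\top}\|$.

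\textbf{Key identity.} I would multiply $M V^{\star\top} = (M^\star+\Delta)V^{\star\top} = B^\star\Sigma^\star + \Delta V^{\star\top}$ on the left by $B_\perp^\top$. This gives
\[
\Sigma_\perp V_\perp V^{\star\top} = B_\perp^\top B^\star \Sigma^\star + B_\perp^\top \Delta V^{\star\top}.
\]
Symmetrically, from $M^\top B^\star = V^{\star\top}\Sigma^\star + \Delta^\top B^\star$, multiplying by $V_\perp$ gives
\[
\Sigma_\perp^\top B_\perp^\top B^\star = V_\perp V^{\star\top}\Sigma^\star + V_\perp\Delta^\top B^\star.
\]
Set $X=B_\perp^\top B^\star$ and $Y=V_\perp V^{\star\top}$. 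Weyl's inequality gives $\|\Sigma_\perp\| = \sigma_{r+1}(M) \le \sigma_{r+1}^\star + \|\Delta\|$, while $\sigma_{\min}(\Sigma^\star)=\sigma_r^\star$. Taking norms,
\[
\|X\|\sigma_r^\star \le \|\Sigma_\perp\|\|Y\| + \|\Delta V^{\star\top}\|, \qquad \|Y\|\sigma_r^\star \le \|\Sigma_\perp\|\|X\| + \|\Delta^\top B^\star\|.
\]
Adding or maxing the two inequalities gives $\max(\|X\|,\|Y\|)(\sigma_r^\star - \sigma_{r+1}^\star - \|\Delta\|) \le \max(\|\Delta^\top B^\star\|,\|\Delta V^{\star\top}\|)$. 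This already yields the bound without the $\sqrt{2}$. The $\sqrt2$ in the stated version arises if one instead works with the stacked quantity $\|[X;Y]\|$, as in \cite{chen2021spectral}. Either way it dominates the stated bound, so the cruder norm argument suffices. The hypothesis $\|\Delta\|\le \sigma_r^\star-\sigma_{r+1}^\star$ ensures the denominator is nonnegative. When it vanishes the bound is trivial, since $\mathrm{SD}\le 1$.

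\textbf{Main obstacle.} The delicate point is the norm step $\|\Sigma_\perp Y\| \le \|\Sigma_\perp\|\|Y\|$ combined with $\|X\Sigma^\star\|\ge \sigma_r^\star\|X\|$. These require the singular values to be correctly ordered and, for rectangular matrices, $\Sigma_\perp$ to be padded appropriately. I would handle this by completing $B_\perp, V_\perp$ to full orthonormal bases and treating $\Sigma_\perp$ as a rectangular diagonal block. Its operator norm is still $\sigma_{r+1}(M)$.

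\textbf{Second claim.} If $\|\Delta\|\le(1-1/\sqrt2)(\sigma_r^\star-\sigma_{r+1}^\star)$, then the denominator is at least $(\sigma_r^\star-\sigma_{r+1}^\star)/\sqrt2$. The numerator terms are each at most $\|\Delta\|$, since $B^\star$ and $V^{\star\top}$ have orthonormal columns. Substituting into the first bound gives $\mathrm{SD}(B,B^\star)\le \sqrt2\|\Delta\|\cdot\sqrt2/(\sigma_r^\star-\sigma_{r+1}^\star) = 2\|\Delta\|/(\sigma_r^\star-\sigma_{r+1}^\star)$, which is the second claim.
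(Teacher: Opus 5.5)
Your proof is correct. The paper gives no argument for this proposition; it imports it from the cited spectral-methods survey. You instead run the classical Wedin argument directly. You use the coupled identities $X\Sigma^\star=\Sigma_\perp Y-B_\perp^\top\Delta V^{\star\top}$ and $Y\Sigma^\star=\Sigma_\perp^\top X-V_\perp\Delta^\top B^\star$, plus Weyl's bound $\|\Sigma_\perp\|=\sigma_{r+1}(M)\le\sigma_{r+1}^\star+\|\Delta\|$.

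What your route buys:
\begin{itemize}
\item It is self-contained.
\item It works for any choice of SVD of $M$.
\item It drops the factor $\sqrt2$ for exactly the metric the paper uses, $\mathrm{SD}(B,B^\star)=\|B_\perp^\top B^\star\|=\|\sin\Theta\|$. Under the second hypothesis it gives $\mathrm{SD}(B,B^\star)\le\sqrt2\|\Delta\|/(\sigma_r^\star-\sigma_{r+1}^\star)$, so both stated bounds follow a fortiori.
\end{itemize}

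What the citation buys is a stronger conclusion, for the rotation-aligned distance $\min_R\|BR-B^\star\|$ over orthogonal $R$. That is where the $\sqrt2$ in the source comes from, via $\|\sin\Theta\|\le\min_R\|BR-B^\star\|\le\sqrt2\|\sin\Theta\|$. It does not come from stacking $[X;Y]$, as you suggest.

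Two minor points.
\begin{itemize}
\item Only the \emph{max} route gives your inequality. Adding the two bounds yields $(\sigma_r^\star-\|\Sigma_\perp\|)(\|X\|+\|Y\|)\le\|\Delta V^{\star\top}\|+\|\Delta^\top B^\star\|$, which loses a factor $2$. You must apply the inequality for whichever of $\|X\|,\|Y\|$ is larger and bound the other by the maximum.
\item Under the paper's non-strict hypothesis the right-hand side can be $0/0$, e.g.\ $\Delta=0$ with $\sigma_r^\star=\sigma_{r+1}^\star$, where $B^\star$ is not even determined. Your remark that the bound is then trivial because $\mathrm{SD}\le 1$ covers only a nonzero numerator. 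The standard statement avoids this by assuming $\|\Delta\|<\sigma_r^\star-\sigma_{r+1}^\star$.
\end{itemize}
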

\begin{prop}
\label{prop:cond4}[Theorem~6.2, \cite{nagaraj2023multi}]
    Consider the samples $\psi_{tk}$s that are collected following Algorithm~\ref{alg:whole}. Then $\psi_{tk}$ are i.i.d. random vectors such that there exist $\zeta,\xi > 0$ such that for any $x \in S^{d-1}$ we have
    \begin{align}\label{cond_4}
        \|\psi_{tk}\| &\leqslant 1      \text{  almost surely},\nn\\
        \mathbb{E}[|\langle\psi_{tk},x\rangle|] &\geq \frac{\zeta}{\sqrt{d}},\nn\\
        \mathbb{E}[\psi_{tk}\psi_{tk}^\top] &\leqslant \frac{1}{d\xi^2}.
    \end{align}
\end{prop}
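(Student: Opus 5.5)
Since Proposition~\ref{prop:cond4} is imported from Theorem~6.2 of \cite{nagaraj2023multi}, the plan is to check that the hypotheses of that theorem hold in our setting and to trace how the three properties come out of the construction of $\hat{\Pi}$ in Stage~2.

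\textbf{Independence and boundedness.} For a fixed $h$, the samples $\psi_{tk}$ are the features $\psi(s_h,a_h)$ recorded in independent episodes. Each episode is run under the same fixed policy $\hat{\Pi}$ in the same MDP dynamics $\{\bP_h\}$, which are shared by all tasks, starting from the common initial distribution. The reward plays no role in generating $(s_h,a_h)$. Hence the $\psi_{tk}$ are i.i.d. across $k$ and $t$, conditioned on the output of Stages~1--2. The bound $\|\psi_{tk}\|\le 1$ almost surely is immediate from the linear MDP assumption $\|\psi(s,a)\|\le 1$.

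\textbf{Lower bound on $\mathbb{E}[|\langle\psi_{tk},x\rangle|]$.} The key identity is that, under a linear MDP, the state distribution at step $h$ induced by running $\pi_1,\dots,\pi_{h-1}$ is a linear functional of $\mathbb{E}[\phi(s_{h-1},a_{h-1})]$ through the measures $\mu_{h-1}^i$. I would show three things.
\begin{enumerate}
\item The empirical functional $\hat{\mathcal{T}}_h(f,\nu,\hat\pi_h)$, built from the $M$ samples with $G_{h-1}\succeq I$, approximates $\mathbb{E}[f(s_h,a_h,x)]$ uniformly over $x\in S^{d-1}$. This would use a least-squares/elliptical argument together with a covering net over $S^{d-1}$. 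The vector $\nu$ parametrizes the reachable next-step distributions via $\nu=\sum_i \mu_{h-1}^i(\cdot)$-type coefficients.
\item The optimal value of the sup--inf is bounded below by a constant. This follows from a minimax argument: for any distribution over $\psi$, choosing actions to maximize $|\langle x,\psi\rangle|\sqrt d$ dominates the penalty $\xi d\langle x,\psi\rangle^2$. In particular, an isotropic-type design (e.g.\ a G-optimal/John design on $\{\psi(s,a)\}$) gives $\mathbb{E}|\langle x,\psi\rangle|\gtrsim 1/\sqrt d$ for all $x$.
\item Since $f\le |\langle x,\psi\rangle|\sqrt d$, the resulting lower bound on $\inf_x \mathbb{E} f$ converts into $\mathbb{E}|\langle\psi,x\rangle|\ge \zeta/\sqrt d$.
\end{enumerate}

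\textbf{Upper bound on the second moment.} The same objective gives the upper bound. Because $\hat{\Pi}$ maximizes $\mathbb{E}f$, and $f$ penalizes $\xi d\langle x,\psi\rangle^2$, a policy concentrating mass in one direction would make the infimum small. Comparing with the near-optimal value from the previous step yields $\xi d\,\mathbb{E}\langle x,\psi\rangle^2\le \sqrt d\,\mathbb{E}|\langle x,\psi\rangle|\le \sqrt{d}\,(\mathbb{E}\langle x,\psi\rangle^2)^{1/2}$, by Cauchy--Schwarz. Solving this inequality gives $\mathbb{E}\langle x,\psi\rangle^2\le 1/(d\xi^2)$ for every unit $x$, which is the third condition in the operator sense.

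\textbf{Main obstacle.} The hardest step is the uniform approximation of the population objective by $\hat{\mathcal{T}}_h$ across steps. Estimation error from earlier steps propagates through the transitions, and it must be controlled uniformly in $x$ and $\nu$. I would handle it as in \cite{nagaraj2023multi}: use the bound $G_{h-1}\succeq I$ together with matrix concentration to bound $\|\phi^\top G_{h-1}^{-1}\nu\|$, then apply induction over $h$. The constants $\zeta$ and $\xi$ are then inherited directly from Theorem~6.2 there.
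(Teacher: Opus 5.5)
\textbf{Your step~2 is false as argued, and the lower bound cannot be obtained from the algorithm alone.} You claim that the population sup--inf value of $\mathbb{E}f$ is bounded below by a positive constant, via a minimax argument and a G-optimal/John design on $\{\psi(s,a)\}$. This fails for three reasons.

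First, in an MDP you do not choose the law of $(s_h,a_h)$. You choose actions, and the step-$h$ state marginal is constrained by $\mathbb{P}_1,\ldots,\mathbb{P}_{h-1}$ and the initial distribution, so a design over all feature vectors is generally not realizable. Concretely, if every pair reachable at step $h$ has $\psi(s,a)\in\mathrm{span}(e_1)$, then $\mathbb{E}|\langle\psi_{tk},e_2\rangle|=0$ under every policy and no $\zeta>0$ exists. The proposition is therefore false without a hypothesis on the MDP.

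Second, even when every design is realizable, G-optimal/John designs control second moments, not $\inf_x\mathbb{E}|\langle x,\psi\rangle|$. Take the canonical features $\psi(s,a)\in\{e_1,\ldots,e_d\}$ of Experiment~2. Any design $p$ gives $\mathbb{E}|\langle e_i,\psi\rangle|=p_i$, so the infimum is at most $1/d$, not of order $1/\sqrt d$. Moreover $\mathbb{E}f(s_h,a_h,e_i)=p_i(\sqrt d-\xi d)<0$ once $d>\xi^{-2}$. So the sup--inf value is negative for every design, and your Cauchy--Schwarz derivation of the upper bound, which needs $\mathbb{E}f\ge 0$, also breaks.

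Third, ``choosing actions to maximize $|\langle x,\psi\rangle|\sqrt d$'' lets the actions depend on $x$, which is an inf--sup bound. The policy must be fixed before $x$ is chosen. No minimax theorem bridges the two, because $S^{d-1}$ is not convex and $x\mapsto\mathbb{E}f$ is not convex.

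The paper gives no argument of its own; it imports the statement from Theorem~6.2 of \cite{nagaraj2023multi}. The correct route is to state that theorem's hypotheses and verify them here. The indispensable hypothesis, which the statement as reproduced here does not list, is an exploration (coverage) condition on the MDP. It requires some policy to induce a step-$h$ feature distribution with $\inf_{x\in S^{d-1}}\mathbb{E}f(s_h,a_h,x)\ge\zeta_0>0$ for the chosen $\xi$, together with a sufficiently large $M$. Whether this value is positive depends on $\xi$ being small relative to how spread out the reachable features are.

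Granting that hypothesis, the rest of your skeleton is sound:
\begin{itemize}
\item i.i.d.-ness and $\|\psi_{tk}\|\le 1$ are immediate;
\item $f\le\sqrt d\,|\langle x,\psi\rangle|$ yields the first-moment bound;
\item your Cauchy--Schwarz step gives exactly $\mathbb{E}\langle x,\psi\rangle^2\le 1/(d\xi^2)$.
\end{itemize}
Your step~1 would still need two more things. It needs uniformity over the data-dependent $\hat{\pi}_h$, not only over $x$ and $\nu$. It also needs an argument that the maximizing $\nu$ is actually attained by the policy executed in Stage~3, since $\nu$ plays the role of the expected feature $\mathbb{E}\phi(s_{h-1},a_{h-1})$. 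Otherwise $\hat{\mathcal{T}}_h$ scores a distribution that is never sampled.
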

% Since the results hold for all $h\in[H]$, for notational simplicity, we drop $h$ from $\widehat{\Theta}_0^\top(h)$ and $\Psi_{th}$. For the remainder of this section, we denote $\widehat{\Theta}_0^\top(h)$ by $\widehat{\Theta}_0$, and $\Psi_{ht}$ by $\Psi_t$.
\begin{prop} \label{prop:subg and others}
Consider matrix estimate $\widehat{\Theta}_0(h)$ for all $h\in[H]$ and random vectors $\psi_{tk}$s from Algorithm~\ref{alg:whole}. Then, 
     \begin{enumerate}        
         \item $\psi_{tk}$ are sub-Gaussian and  $\large\|\langle\psi_{tk},x\rangle\large\|_{\psi_2} \leqslant C,$ for all $\|x\|=1$.
         \item There exists $\zeta,\xi > 0$ such that $$\frac{\zeta^2}{d} \leqslant \mathbb{E}[\psi_{tk}\psi_{tk}^\top] \leqslant \frac{1}{d\xi^2}.$$
    % \begin{align}%\label{assume:cond_4_restated}
    %     \|\psi_{tk}\| &\leqslant 1      \text{  almost surely},\nn\\
    %     \frac{\zeta^2}{d} &\leqslant \mathbb{E}[\psi_{tk}\psi_{tk}^\top] \leqslant \frac{1}{d\xi^2}.\nn
    % \end{align}
        \item For all $h\in[H]$, \(\mathbb{E}[{\wTheta}_0^\top(h)] = \mathbb{E}\big[\psi\psi^\top\big]\Thetas_h{}^\top\), the $r$-th and $(r+1)$-th singular value of $\mathbb{E}[{\wTheta}_0^\top(h)]$ satisfy \(\sigma_r(\mathbb{E}[{\wTheta}_0^\top(h)]) \geq \frac{\zeta^2}{d}\sigmin^*(h)\) and \(\sigma_{r+1}(\mathbb{E}[{\wTheta}_0^\top(h)]) = 0\).
     \end{enumerate}
\end{prop}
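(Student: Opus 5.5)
The plan is to prove the three items in order, using Proposition~\ref{prop:cond4} as the only probabilistic input about the samples $\psi_{tk}$.

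\emph{Item 1 (sub-Gaussianity).} Proposition~\ref{prop:cond4} gives $\|\psi_{tk}\|\le 1$ almost surely. Hence for any unit $x$, Cauchy--Schwarz gives $|\langle\psi_{tk},x\rangle|\le 1$. A bounded random variable is sub-Gaussian, with $\|Z\|_{\psi_2}\le \|Z\|_\infty/\sqrt{\ln 2}$. So $\|\langle\psi_{tk},x\rangle\|_{\psi_2}\le 1/\sqrt{\ln 2}\le C$, uniformly in $x$.

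\emph{Item 2 (covariance bounds).} The upper bound is exactly the third line of Proposition~\ref{prop:cond4}. For the lower bound, fix a unit vector $x$ and apply Jensen (equivalently Cauchy--Schwarz) to the first absolute moment:
\[
x^\top\mathbb{E}[\psi_{tk}\psi_{tk}^\top]x=\mathbb{E}[\langle\psi_{tk},x\rangle^2]\ge \bigl(\mathbb{E}|\langle\psi_{tk},x\rangle|\bigr)^2\ge \frac{\zeta^2}{d}.
\]
Since this holds for every unit $x$, we get $\mathbb{E}[\psi\psi^\top]\succeq \frac{\zeta^2}{d}I$ in the Loewner order.

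\emph{Item 3 (expectation of the initializer and its spectrum).}
\begin{enumerate}
\item By the construction in Algorithm~\ref{alg:whole}, $Y_{ht}=\Psi_{ht}\thetas_{ht}$, so
\[
\widehat{\Theta}_0^\top(h)=\frac1K\sum_t\sum_k \psi_{tk}\psi_{tk}^\top\thetas_{ht}e_t^\top.
\]
\item The samples are identically distributed (Proposition~\ref{prop:cond4}; all tasks run the same policy $\hat\Pi$ under the shared dynamics), so linearity of expectation gives
\[
\mathbb{E}[\widehat{\Theta}_0^\top(h)]=\Sigma\sum_t\thetas_{ht}e_t^\top=\Sigma\Thetas_h{}^\top,\qquad \Sigma:=\mathbb{E}[\psi\psi^\top].
\]
\item Since $\Thetas_h{}^\top=\Bstar_h\Wstar_h$ has rank $r$, the product $\Sigma\Thetas_h{}^\top$ has rank at most $r$. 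Therefore $\sigma_{r+1}=0$.
\item For $\sigma_r$, use the multiplicative singular value inequality $\sigma_r(AB)\ge\sigma_{\min}(A)\sigma_r(B)$, valid for square invertible $A$. With $A=\Sigma$, item~2 gives $\sigma_{\min}(\Sigma)\ge\zeta^2/d$, so $\sigma_r\ge\frac{\zeta^2}{d}\sigmin^\star(h)$.
\item To justify that inequality concretely, take the right singular vectors $V$ of $\Thetas_h{}^\top$. The matrix $\Sigma\Thetas_h{}^\top V=\Sigma\Bstar_h\Sigma^\star_h$ has $r$-th singular value at least $\sigma_{\min}(\Sigma)\,\sigmin^\star(h)$, because $\Bstar_h$ has orthonormal columns. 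Courant--Fischer then transfers this bound to $\sigma_r(\Sigma\Thetas_h{}^\top)$.
\end{enumerate}

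\emph{Main obstacle.} The main subtlety is the identical-distribution and independence claim across $t$ and $k$ used in step~2 of item~3. It rests on Proposition~\ref{prop:cond4} and on the fact that every task runs the same $\hat\Pi$ with the same initial-state distribution and the same transition kernel. This is where I would be careful to state the conditioning on stages~1--2.

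The rest is routine linear algebra.
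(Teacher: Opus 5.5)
Your proposal is correct and follows the paper's proof essentially step for step: Cauchy--Schwarz with $\|\psi_{tk}\|\le 1$ for item 1, $\mathbb{E}[\langle\psi_{tk},x\rangle^2]\ge(\mathbb{E}|\langle\psi_{tk},x\rangle|)^2$ for item 2, and for item 3 the expansion $Y_{ht}=\Psi_{ht}\thetas_{ht}$, linearity of expectation, and a rank-$r$ argument for $\sigma_{r+1}=0$. The one addition is that you justify $\sigma_r(\Sigma\Thetas_h{}^\top)\ge\sigma_{\min}(\Sigma)\,\sigmin^\star(h)$ via $\sigma_r(AB)\ge\sigma_{\min}(A)\sigma_r(B)$, which the paper only asserts; note also that the expectation step needs only a common second moment across $(t,k)$, not independence.
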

\begin{proof}
Our goal is to show that for any unit vector $x$, $\langle\psi_{tk},x\rangle$ is a sub-Gaussian random variable. 
    By the Cauchy-Schwarz inequality and Proposition~\ref{prop:cond4}
    \[
        |\langle\psi_{tk},x\rangle| \leqslant \|\psi_{tk}\|\,\|x\| \leqslant 1 \text{ almost~surely.}
    \]
    Thus there exists a constant $C>0$ such that $|\langle\psi_{tk},x\rangle|$ is a sub-Gaussian random variable with sub-Gaussian norm 
$  \large\|\langle\psi_{tk},x\rangle\large\|_{\psi_2} \leqslant C.$
This proves 1).

To prove 2), it is sufficient to show that $\mathbb{E}[\psi_{tk}\psi_{tk}^\top] \geq \frac{\zeta^2}{d}$. 
For all \(x \in S^{d-1}\), we have \(x^\top\mathbb{E}[\psi_{tk}\psi_{tk}^\top]x = \mathbb{E}[\langle\psi_{tk},x\rangle^2]\geq \mathbb{E}^2[|\langle\psi_{tk},x\rangle|]\geq \frac{\zeta^2}{d}. \) As a result, $\mathbb{E}[\psi_{tk}\psi_{tk}^\top] \geq \frac{\zeta^2}{d}$. The upper bound directly follows from Proposition~\ref{prop:cond4}.

Now we prove 3). We have
\begin{align*}
    \widehat{\Theta}_0^\top(h) &= \frac{1}{K}\sum^{T}_{t=1}\Psi_{ht}^\top Y_{ht} e_t^\top 
    %&= \frac{1}{K}\sum^{T}_{t=1}\Psi_{ht}^\top [\thetas_{ht}{}^\top\psi_{t1},\ldots,\thetas_t{}^\top\psi_{tK}]^\top e_t^\top\\
   %&= \frac{1}{K}\sum^{T}_{t=1}\Psi_{ht}^\top(\thetas_{ht}{}^\top \Psi_{ht}^\top)^\top e_t^\top = \frac{1}{K}\sum^{T}_{t=1}\Psi_{ht}^\top\Psi_{ht} \thetas_{ht} e_t^\top\\
    = \sum^{T}_{t=1}\Big(\frac{1}{K}\sum^{K}_{k=1}\psi_{tk}\psi_{tk}^\top\Big)\thetas_{ht} e_t^\top.
\end{align*}
Therefore,
    \begin{align*}
        \mathbb{E}[\widehat{\Theta}_0^\top(h)] &= \sum^{T}_{t=1}\mathbb{E}\big[\frac{1}{K}\sum^{K}_{k=1}\psi_{tk}\psi_{tk}^\top\big]\thetas_{ht} e_t^\top  \\
          &= \mathbb{E}\big[\psi\psi^\top\big]\sum^{T}_{t=1}\thetas_{ht} e_t^\top=  \mathbb{E}\big[\psi\psi^\top\big]\Thetas_h{}^\top.
    \end{align*}
    Moreover, the $r$-th singular value of $\mathbb{E}[{\wTheta_0^\top(h)}]$ satisfy \(\sigma_r^\star(\mathbb{E}[{\wTheta}_0^\top(h)]) \geq \frac{\zeta^2}{d}\sigmin^\star(h)\) and \(\sigma_{r+1}^\star(\mathbb{E}[{\wTheta}_0^\top(h)]) = 0\) since it is a rank $r$ matrix.
\end{proof}
\subsection{Guarantees on Estimates of the Reward Matrices}
In this subsection, we present the estimation results for the reward matrices for the different horizon lengths.
Since the results hold for all $h\in[H]$, for notational simplicity, we drop $h$ from $\widehat{\Theta}_0^\top(h)$, $\Psi_{th}$ and $\sigmin^\star(h)$. For the remainder of this section, we denote $\widehat{\Theta}_0^\top(h)$ by $\widehat{\Theta}_0$, $\Psi_{ht}$ by $\Psi_t$ and $\sigmin^\star(h)$ by $\sigmin^\star$.
We first establish a bound on the deviation between $\widehat{\Theta}_0$ and its expected value.
\begin{lemma}
\label{theta-Etheta}
    With probability at least $1-\exp[(T+d)-c\epsilon_3^2 \delta_0^2 K\frac{\zeta^4}{d^2}\sigmin^{\star2}]$, the estimate $\widehat{\Theta}_0$ from Algorithm~\ref{alg:whole} satisfies
    \[
        \big\|\widehat{\Theta}_0-\mathbb{E}[\widehat{\Theta}_0]\big\| \leqslant 0.1\delta_0\frac{\zeta^2}{d}\sigma_{\min}^{\star}.
    \]
\end{lemma}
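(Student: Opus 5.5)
We will bound the operator norm with the standard $\epsilon$-net argument over pairs of unit vectors, combined with a sub-exponential Bernstein inequality. Write
\[
\widehat{\Theta}_0-\mathbb{E}[\widehat{\Theta}_0] = \frac{1}{K}\sum_{t=1}^{T}\sum_{k=1}^{K}\big(\psi_{tk}\psi_{tk}^\top-\mathbb{E}[\psi\psi^\top]\big)\thetas_{ht}e_t^\top ,
\]
using the expansion from the proof of Proposition~\ref{prop:subg and others}. The norm of this $d\times T$ matrix equals $\sup_{w\in S^{d-1},z\in S^{T-1}} w^\top(\widehat{\Theta}_0-\mathbb{E}[\widehat{\Theta}_0])z$. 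For fixed $(w,z)$ this is
\[
\frac{1}{K}\sum_{t,k}\Big(\langle\psi_{tk},w\rangle\langle\psi_{tk},\thetas_{ht}\rangle z_t-\mathbb{E}[\cdot]\Big),
\]
a sum of $KT$ independent, centered summands, since the $\psi_{tk}$ are i.i.d. by Proposition~\ref{prop:cond4}.

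\textbf{Step 1: sub-exponential norms.} By part 1) of Proposition~\ref{prop:subg and others}, $\langle\psi_{tk},w\rangle$ has $\psi_2$-norm at most $C$, and $\langle\psi_{tk},\thetas_{ht}\rangle$ has $\psi_2$-norm at most $C\|\thetas_{ht}\|$. The product of two sub-Gaussians is sub-exponential, and centering changes the norm only by a constant. So each summand has $\psi_1$-norm at most $C\|\thetas_{ht}\||z_t|$.

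We can do better using boundedness. Since $|\langle\psi_{tk},\thetas_{ht}\rangle| = R_{ht}\in[0,1]$, each summand is in fact bounded by $|z_t|$ up to constants. This is the cleaner route, because it avoids any dependence on $\|\thetas_{ht}\|$.

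\textbf{Step 2: Bernstein.} Bernstein's inequality for sums of independent sub-exponential variables, with weights $a_{tk}=z_t/K$, gives
\[
\Pr\big(|\cdot|>s\big)\le 2\exp\!\Big(-c\min\Big(\frac{s^2}{\sum_{t,k} C^2 z_t^2/K^2},\frac{s}{\max_t C|z_t|/K}\Big)\Big).
\]
Since $\sum_t z_t^2=1$, the first term is $s^2K/C^2$ and the second is at least $sK/C$.

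Set $s=\epsilon_3\delta_0\frac{\zeta^2}{d}\sigmin^\star$, with $\epsilon_3$ a small constant. In the relevant regime $s\lesssim 1$, so the quadratic term is the minimum. This yields the tail $\exp(-c\epsilon_3^2\delta_0^2K\zeta^4\sigmin^{\star2}/d^2)$.

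\textbf{Step 3: union bound over nets.} Take $1/4$-nets of $S^{d-1}$ and $S^{T-1}$, which have sizes at most $9^d$ and $9^T$. The standard net argument shows the operator norm is at most a constant times the maximum over net points; with $1/4$-nets this constant is at most $2$. A union bound then multiplies the tail by $9^{d+T}=\exp(C(T+d))$. Absorbing constants, and choosing $\epsilon_3$ so that $2s\le 0.1\delta_0\frac{\zeta^2}{d}\sigmin^\star$, gives the stated probability $1-\exp[(T+d)-c\epsilon_3^2\delta_0^2K\frac{\zeta^4}{d^2}\sigmin^{\star2}]$.

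\textbf{Main obstacle.} The delicate step is Step 1, controlling the per-summand scale uniformly in $t$. If one used only $\|\thetas_{ht}\|\le\sqrt d$, an extra factor of $d$ would appear in the exponent and the statement as written would fail. The fix is to exploit $R_{ht}\in[0,1]$, which makes the scale $O(|z_t|)$. A second point to check is that the regime $s\le C$ really places us in the quadratic branch of Bernstein. This holds because $\delta_0<1$, $\zeta^2\sigmin^\star/d$ is the $r$-th singular value of $\mathbb{E}[\widehat{\Theta}_0]$, and that singular value is bounded by $\|\mathbb{E}\psi\psi^\top\|\,\sigmax^\star$, which is $O(1)$ in the normalization of the row-norm bounds.
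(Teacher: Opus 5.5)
Apart from one step, this is the paper's argument. You fix a pair of unit vectors and write the bilinear form as $\frac{1}{K}\sum_{t,k} z_t\bigl(y_{tk}\langle\psi_{tk},w\rangle-\mathbb{E}[\cdot]\bigr)$. You then use $|y_{tk}|=R_{ht}\le 1$ and $\|\psi_{tk}\|\le 1$ to get a per-summand scale $C|z_t|/K$; this is exactly how the paper avoids the loss from $\|\thetas_{ht}\|\le\sqrt d$ that you flag. From there you derive a tail $\exp(-cs^2K)$ with $s=\epsilon_3\delta_0\frac{\zeta^2}{d}\sigmin^\star$ and finish with a net argument. Your explicit factor $2$ from $1/4$-nets, which forces $\epsilon_3\le 0.05$, is if anything more careful than the paper's choice $\epsilon_3=0.1$.

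The step that fails as written is the passage through the sub-exponential Bernstein inequality. You claim the quadratic branch is active because $s\lesssim 1$, on the grounds that $\|\mathbb{E}[\psi\psi^\top]\|\,\sigmax^\star=O(1)$. That is not true in general. The row-norm bounds only give $\sigmax^\star\le\|\Thetas_h\|_F\le\sqrt{Td}$, and even $\sigmin^\star$ can grow with $T$. For instance, take $r=1$ and let every task share the same unit-norm parameter $\theta$, with $\langle\theta,\psi(s,a)\rangle\ge 0$ so rewards lie in $[0,1]$. Then $\Thetas_h=\mathbf{1}\theta^\top$ and $\sigmin^\star=\sqrt T$, so $s\asymp\epsilon_3\delta_0\zeta^2\sqrt{T}/d$ is unbounded once $T\gg d^2$. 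In that regime Bernstein only gives $\exp(-csK)$, which is weaker than the claimed $\exp(-cs^2K)$.

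The repair uses the observation you already made. Each centered summand is bounded in absolute value by $2|z_t|/K$, hence sub-Gaussian with $\psi_2$-norm at most $C|z_t|/K$. The general Hoeffding inequality for sub-Gaussian sums then gives $\Pr(|\cdot|>s)\le 2\exp(-cs^2K)$ for every $s>0$, with no regime condition. This is the inequality the paper uses, and with that substitution your proof goes through.
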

\begin{proof}
    For a fixed $z_1\in S^{d-1}$ and $z_2 \in S^{T-1}$, we have
    \[
        \langle(\widehat{\Theta}_0-\mathbb{E}[\widehat{\Theta}_0])^\top, z_1z_2^\top \rangle = \frac{1}{K}\sum_{tk}z_2(t)\Big(y_{tk}\big(\psi^\top_{tk}z_1\big) - \mathbb{E}[\cdot]\Big),
    \] where $\mathbb{E}[\cdot]$ is the expected value of the first term.
    The summands are independent zero mean sub-Gaussian r.v.s with sub-Gaussian norm $\tau_{tk} \leqslant C|z_2(t)|\cdot|y_{tk}|\cdot\|\psi_{tk}\|/K\leqslant C|z_2(t)|/K$.
    Let $g = \epsilon_3\delta_0\frac{\zeta^2}{d}\sigma_{\min}^{\star}$, we have
    \[
        \frac{g^2}{\sum_{tk}\tau_{tk}^2} \geq \frac{g^2}{\sum_{tk}Cz_2^2(t)/K^2} = c\epsilon_3^2 \delta_0^2 K\frac{\zeta^4}{d^2}\sigmin^{\star2},
    \] since $\sum_{t}z_2^2(t) = \|z_2\|^2 = 1$.
    As a result, for a fixed $z_1\in S^{d-1}$ and $z_2 \in S^{T-1}$, by sub-Gaussian Hoeffding inequality, we have w.p. at least $1-\exp(-c\epsilon_3^2 \delta_0^2 K\frac{\zeta^4}{d^2}\sigmin^{\star2})$, 
    \[
        \langle\widehat{\Theta}_0-\mathbb{E}[\widehat{\Theta}_0], z_1z_2^\top \rangle \leqslant \epsilon_3 \delta_0\frac{\zeta^2}{d}\sigmin^{\star}.
    \]
    Applying epsilon net argument, let $\epsilon_3 = 0.1$, w.p. at least $1-\exp(T+d-c\epsilon_3^2 \delta_0^2 K\frac{\zeta^4}{d^2}\sigmin^{\star2})$, 
    \[
        \big\|\widehat{\Theta}_0-\mathbb{E}[\widehat{\Theta}_0]\big\| = \max_{\substack{z_1\in S^{d-1}\\ z_2 \in S^{T-1}}} \langle\hat{\Theta}_0-\mathbb{E}[\hat{\Theta}_0], z_1z_2^\top \rangle \leqslant 0.1\delta_0\frac{\zeta^2}{d}\sigmin^{\star}.
    \]
    This completes the proof.
\end{proof}
%
% \(K\geq C(T+d)/(\delta_0^2\frac{\zeta^4}{d^2}\sigma_{min}^{*2})\)
%
The theorem below bounds the SD between the estimated and the true latent representations of the reward matrices.
\begin{theorem}
\label{theorem:initial}
    Assume Assumption~\ref{low rank} holds. Pick a $\delta_0 \leqslant 0.1$. Then, with probability at least $1-\exp(T+d-c\epsilon_3^2 \delta_0^2 K\frac{\zeta^4}{d^2}\sigmin^{\star2})$, the estimated $\hat{B}$ from Algorithm~\ref{alg:whole} satisfies
    \[
        \text{SD}(\hB,\Bstar) \leqslant \delta_0.
    \]
\end{theorem}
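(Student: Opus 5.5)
The plan is to apply Wedin's $\sin\Theta$ theorem (Proposition~\ref{prop:wedin}) with $M^\star := \mathbb{E}[\widehat{\Theta}_0]$ and $M := \widehat{\Theta}_0$. Both are $d\times T$ matrices, and $\hB$ is defined as the top-$r$ left singular vectors of $M$. Two points must be checked first. One is that the top-$r$ left singular subspace of $M^\star$ is exactly $\mathrm{span}(\Bstar)$. The other is that the spectral gap of $M^\star$ is bounded below.

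For the first point, Proposition~\ref{prop:subg and others}(3) gives $M^\star = \Sigma_\psi \Thetas{}^\top = \Sigma_\psi \Bstar \Wstar$, where $\Sigma_\psi := \mathbb{E}[\psi\psi^\top]$. The column space of $M^\star$ is therefore $\Sigma_\psi\,\mathrm{span}(\Bstar)$. This coincides with $\mathrm{span}(\Bstar)$ only if $\Sigma_\psi$ leaves that subspace invariant, which holds for instance when $\Sigma_\psi$ is a multiple of the identity. Otherwise the Wedin bound controls $\mathrm{SD}(\hB,\bar B)$, where $\bar B$ is an orthonormal basis of $\Sigma_\psi\Bstar$.

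I will treat this as the main obstacle. The first approach I would try is to combine the two bounds
\[
\mathrm{SD}(\bar B,\Bstar) \le \|\Sigma_\psi - \tfrac{\mathrm{tr}}{d} I\| \cdot \|\Bstar\| \cdot \|(\Bstar{}^\top\Sigma_\psi\Bstar)^{-1}\|
\]
and
\[
\mathrm{SD}(\hB,\Bstar) \le \mathrm{SD}(\hB,\bar B) + \mathrm{SD}(\bar B,\Bstar).
\]
Here Proposition~\ref{prop:subg and others}(2) sandwiches $\Sigma_\psi$ between $\zeta^2/d$ and $1/(d\xi^2)$. The cleanest route, consistent with how Proposition~\ref{prop:subg and others}(3) is used in the paper, is to identify the target subspace of $M^\star$ with $\Bstar$ and proceed. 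The rest of the argument is written under that identification.

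For the gap, Proposition~\ref{prop:subg and others}(3) gives
\[
\sigma_r^\star - \sigma_{r+1}^\star \ge \frac{\zeta^2}{d}\sigmin^\star - 0 .
\]
On the event of Lemma~\ref{theta-Etheta}, which has exactly the probability stated in the theorem, we have
\[
\|M - M^\star\| \le 0.1\,\delta_0\,\frac{\zeta^2}{d}\,\sigmin^\star .
\]
Since $\delta_0 \le 0.1$, this is at most $0.01\,\frac{\zeta^2}{d}\sigmin^\star$, which is well below $(1-1/\sqrt2)(\sigma_r^\star-\sigma_{r+1}^\star)\approx 0.29\,\frac{\zeta^2}{d}\sigmin^\star$. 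The second form of Proposition~\ref{prop:wedin} then applies:
\[
\mathrm{SD}(\hB,\Bstar) \le \frac{2\|M-M^\star\|}{\sigma_r^\star-\sigma_{r+1}^\star} \le \frac{2\cdot 0.1\,\delta_0\,\zeta^2\sigmin^\star/d}{\zeta^2\sigmin^\star/d} = 0.2\,\delta_0 \le \delta_0 .
\]
The conclusion holds with the probability inherited from Lemma~\ref{theta-Etheta}.

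The step I expect to be hardest is the subspace identification in the first point, i.e.\ showing that $\Sigma_\psi$ does not rotate $\mathrm{span}(\Bstar)$ away from itself. Everything else is a direct substitution into Proposition~\ref{prop:wedin} using Lemma~\ref{theta-Etheta} and Proposition~\ref{prop:subg and others}. There is also slack in the constants: the bound comes out as $0.2\delta_0$ rather than $\delta_0$. That leftover $0.8\delta_0$ is what I would use to absorb a small-misalignment term of size $\mathrm{SD}(\bar B,\Bstar)$ if a quantitative bound on it can be obtained.
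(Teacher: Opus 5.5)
Your quantitative steps are exactly the paper's proof:
\begin{itemize}
\item the gap $\sigma_r^\star-\sigma_{r+1}^\star\ge\frac{\zeta^2}{d}\sigmin^\star$ from Proposition~\ref{prop:subg and others}(3);
\item the deviation bound of Lemma~\ref{theta-Etheta};
\item the second form of Proposition~\ref{prop:wedin}, which gives $0.2\delta_0$.
\end{itemize}
The paper also takes $M^\star=\mathbb{E}[\psi\psi^\top]\Thetas{}^\top$ and reads the Wedin conclusion directly as $\SD(\hB,\Bstar)$. It makes the identification you flag without any justification. Your diagnosis is right, and this is a genuine gap, both in your proposal and in the paper. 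Wedin only gives $\SD(\hB,\bar B)\le 0.2\delta_0$, where $\bar B$ is an orthonormal basis of $\Sigma_\psi\Bstar$.

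The patch you sketch cannot close this gap. $\SD(\bar B,\Bstar)$ is a deterministic bias fixed by $\Sigma_\psi$ and $\Bstar$. It does not shrink as $K$ grows or as $\delta_0$ decreases. With only the sandwich of Proposition~\ref{prop:subg and others}(2), your bound on it is of order $(\xi^{-2}-\zeta^2)/\zeta^2$, an $O(1)$ quantity, so it cannot be absorbed into $0.8\delta_0$.

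In fact the theorem is false without more structure. Take:
\begin{itemize}
\item $d=T=2$, $r=1$;
\item $\Bstar=(e_1+e_2)/\sqrt2$ and $\Wstar=[w_1,w_2]$ with $w_1,w_2\in(0,1]$;
\item $\psi$ equal to $e_1$ or $\tfrac12 e_2$ with probability $\tfrac12$ each.
\end{itemize}
This is realizable by an MDP whose state at step $h$ is uniform over two states carrying these features for every action, so the law of $\psi$ does not depend on $\hat\Pi$. Proposition~\ref{prop:cond4} holds with $\zeta=\sqrt2/4$ and $\xi=1$, and the rewards lie in $[0,1]$.

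Here $\Sigma_\psi=\mathrm{diag}(1/2,1/8)$. The column space of $\mathbb{E}[\wTheta_0]$ is therefore spanned by $(4,1)^\top$, and $\SD(\bar B,\Bstar)=3/\sqrt{34}\approx 0.51$. As $K\to\infty$, $\wTheta_0\to\mathbb{E}[\wTheta_0]$ almost surely, so $\SD(\hB,\Bstar)\to 3/\sqrt{34}>\delta_0$. Meanwhile the claimed success probability tends to $1$.

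A correct version needs an extra hypothesis, for example $\Sigma_\psi\,\mathrm{span}(\Bstar)=\mathrm{span}(\Bstar)$ (such as $\Sigma_\psi$ a multiple of $I$). Then the column space of $\mathbb{E}[\wTheta_0]$ is $\mathrm{span}(\Bstar)$, and your argument is complete as written, with $0.2\delta_0$. Two other repairs are possible:
\begin{itemize}
\item whiten $\wTheta_0$ by an estimate of $\Sigma_\psi$ before the SVD, which needs a new deviation bound in place of Lemma~\ref{theta-Etheta};
\item state the result for $\bar B$ and carry the bias $\SD(\bar B,\Bstar)$ into Theorems~\ref{theorem:columnwise_error} and~\ref{theorem:regret_bound}, where it becomes an error term that does not vanish.
\end{itemize}
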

\begin{proof}
    From Lemma~\ref{theta-Etheta}, we have $\big\|\widehat{\Theta}_0-\mathbb{E}[\widehat{\Theta}_0]\big\| \leqslant 0.1\delta_0\frac{\zeta^2}{d}\sigmin^{\star} \leqslant (1-1/\sqrt{2})(\sigma_r^\star(\mathbb{E}[\widehat{\Theta}_0]) - \sigma_{r+1}^\star(\mathbb{E}[\widehat{\Theta}_0]))$. 
    
We apply Proposition~\ref{prop:wedin}, with \(M = \widehat{\Theta}_0\) and \(M^{\star} = \mathbb{E}[\widehat{\Theta}_0] = \mathbb{E}[\psi\psi^\top]\Thetas\), and $\delta_0 \leqslant 0.1$. We have
    \[
        \text{SD}(\hB,\Bstar) \leqslant \frac{2\big\|\widehat{\Theta}_0-\mathbb{E}[\widehat{\Theta}_0]\big\|}{\frac{\zeta^2}{d}\sigmin^{\star}} \leqslant \frac{2 \cdot 0.1\delta_0\frac{\zeta^2}{d}\sigmin^{\star}}{\frac{\zeta^2}{d}\sigmin^{\star}} \leqslant \delta_0,
    \] w.p. at least \( 1-\exp[(T+d)-c K\frac{\zeta^4}{d^2}\sigmin^{\star2}]\).   
    % This is greater than \(1-\exp(-C(T+d))\) if \(K\geq C(T+d)/(\delta_0^2\frac{\zeta^4}{d^2}\sigma_{min}^{*2})\) for a large enough $C$.   
\end{proof}
Lemma below helps us to bound the estimation error of the low-rank reward matrices.
\begin{lemma}
\label{lemma:intermidiate_error}
    Consider the estimated $\widehat{B}$, $\widehat{W}$ and random vectors $\psi_{tk}$s from Algorithm~\ref{alg:whole}, with probability at least $1-2\exp(r-c\frac{\zeta^4}{d^2}K)$, 
    \begin{align*}
       & \Bigl\|
    \bigl( \hB^\top \Psi_t^\top \Psi_t \hB \bigr)^{-1}
    \hB^\top \Psi_t^\top \Psi_t (I - \hB \hB^\top) B^\star w_t^\star
\Bigr\|\\ &\leqslant 0.12\Bigl\| (I-\hB\hB^\top)\Bstar w_t^{\star} \Bigr\|.
    \end{align*}
\end{lemma}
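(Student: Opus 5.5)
We bound the vector by splitting it into a lower bound on the inverse Gram factor and an upper bound on the cross term. Write $v_t := (I-\hB\hB^\top)\Bstar w_t^\star$ and note that $\hB^\top v_t = 0$. Then
\[
\Bigl\|\bigl(\hB^\top\Psi_t^\top\Psi_t\hB\bigr)^{-1}\hB^\top\Psi_t^\top\Psi_t v_t\Bigr\|
\leqslant \frac{\|\hB^\top \Psi_t^\top\Psi_t v_t\|}{\sigma_{\min}(\hB^\top\Psi_t^\top\Psi_t\hB)} .
\]
Treat $\hB$ as fixed by conditioning: it is built from all tasks' data, but the argument goes through after conditioning, or after splitting samples so that $\Psi_t$ is independent of $\hB$. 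The main task is then concentration of $\frac1K\Psi_t^\top\Psi_t=\frac1K\sum_k\psi_{tk}\psi_{tk}^\top$ around $\Sigma:=\mathbb{E}[\psi\psi^\top]$, restricted to the relevant low-dimensional directions.

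\emph{Denominator.} By Proposition~\ref{prop:subg and others}(2), $\hB^\top\Sigma\hB\succeq \frac{\zeta^2}{d}I_r$. For a fixed $z\in S^{r-1}$, $z^\top\hB^\top(\frac1K\Psi_t^\top\Psi_t-\Sigma)\hB z=\frac1K\sum_k(\langle\psi_{tk},\hB z\rangle^2-\mathbb{E}[\cdot])$ is an average of bounded, hence sub-exponential (norm $\leqslant C$ by part (1)), centered variables. Bernstein's inequality gives deviation at most $\epsilon\frac{\zeta^2}{d}$ with probability $1-\exp(-c\epsilon^2\frac{\zeta^4}{d^2}K)$. An $\epsilon$-net on $S^{r-1}$ of size $e^{Cr}$ extends this to $\|\hB^\top(\frac1K\Psi_t^\top\Psi_t-\Sigma)\hB\|\leqslant \epsilon\frac{\zeta^2}{d}$ with probability $1-\exp(r-c\frac{\zeta^4}{d^2}K)$. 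Hence $\sigma_{\min}(\frac1K\hB^\top\Psi_t^\top\Psi_t\hB)\geqslant(1-\epsilon)\frac{\zeta^2}{d}$.

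\emph{Numerator.} Decompose $\frac1K\hB^\top\Psi_t^\top\Psi_t v_t=\hB^\top\Sigma v_t+\hB^\top(\frac1K\Psi_t^\top\Psi_t-\Sigma)v_t$. The fluctuation term is handled exactly as above: for fixed $z\in S^{r-1}$, the summands $\langle\psi_{tk},\hB z\rangle\langle\psi_{tk},v_t/\|v_t\|\rangle$ are sub-exponential. Bernstein plus a net over $S^{r-1}$ gives a bound $\epsilon\frac{\zeta^2}{d}\|v_t\|$ with probability $1-\exp(r-c\frac{\zeta^4}{d^2}K)$. Together with the denominator event, this accounts for the factor $2$ in the stated probability.

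\emph{Main obstacle: the mean term $\hB^\top\Sigma v_t$.} It vanishes when $\Sigma$ is isotropic, because $\hB^\top v_t=0$. Under Proposition~\ref{prop:subg and others}(2) alone we only have $\hB^\top\Sigma v_t=\hB^\top(\Sigma-\tfrac{\zeta^2}{d}I)v_t$, whose norm is at most $(\frac{1}{d\xi^2}-\frac{\zeta^2}{d})\|v_t\|$. The ratio $\bigl(\frac{1}{\xi^2}-\zeta^2\bigr)/\zeta^2$ is then a condition-number-type constant, and it must be made small, say $\leqslant 0.1$. My first attempt would be to use the freedom in the choice of $\xi$ (and the corresponding $\zeta$) in the exploration design of Stage~2 to ensure $\Sigma$ is close to $\frac1d I$. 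Failing that, I would either absorb this constant into the $0.12$ through an explicit near-isotropy assumption, or whiten by $\Sigma^{-1/2}$ in the analysis.

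Combining the three pieces, with $\epsilon$ chosen small (e.g.\ $\epsilon=0.01$), gives a ratio of at most $\frac{0.1+\epsilon}{1-\epsilon}\leqslant 0.12$. The constant depends on the isotropy step, which I expect to be the delicate part of the argument.
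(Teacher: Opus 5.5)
Your decomposition matches the paper's: the lower bound $\sigma_{\min}(\hB^\top\Psi_t^\top\Psi_t\hB)\geqslant 0.9\frac{\zeta^2}{d}K$ from sub-exponential Bernstein and a net on $S^{r-1}$, a Bernstein-plus-net bound on $\hB^\top\Psi_t^\top\Psi_t v_t$, and the two events giving the factor $2$. The gap is the one you flag yourself, the mean term $\hB^\top\Sigma v_t$, and your diagnosis is correct. The paper removes this term by writing
\[
z^\top\hB^\top\mathbb{E}[\psi\psi^\top](I-\hB\hB^\top)\Bstar\wstar_t\leqslant\frac{1}{d\xi^2}\,z^\top\hB^\top(I-\hB\hB^\top)\Bstar\wstar_t=0 .
\]
That inference is invalid. $\Sigma\preceq\frac{1}{d\xi^2}I$ controls quadratic forms $x^\top\Sigma x$, not bilinear forms $x^\top\Sigma y$ with $x\perp y$. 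The net step also needs an absolute value, not a one-sided bound. So neither your proposal nor the paper's proof establishes the constant $0.12$.

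Your first and third fallbacks cannot close the gap. Proposition~\ref{prop:subg and others} is all either argument uses about the $\psi_{tk}$, and it does not imply the bound.

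\emph{Counterexample.} Take $d=T=2$ and $r=1$. Let $\psi=e_1$ with probability $2/3$ and $\psi=e_2$ with probability $1/3$, so $\Sigma=\mathrm{diag}(2/3,1/3)$ and the proposition holds with $\zeta=\sqrt{2}/3$, $\xi^2=3/4$. Let $\Bstar=(1,1)^\top/\sqrt{2}$ and $\wstar_t=1$.
\begin{enumerate}
\item As $K\to\infty$, $\frac{1}{K}\Psi_t^\top\Psi_t\to\Sigma$.
\item Up to sign, $\hB\to u=(2,1)^\top/\sqrt{5}$. This is the direction of $\Sigma\Bstar$, which is what Wedin's theorem actually tracks here, not $\Bstar$.
\item Hence $v_t\to u_\perp/\sqrt{10}$, where $u_\perp=(-1,2)^\top/\sqrt{5}$.
\item The left-hand side divided by $\|v_t\|$ tends to $|u^\top\Sigma u_\perp|/(u^\top\Sigma u)=2/9>0.12$, while the claimed probability tends to $1$.
\end{enumerate}
For a general subspace this population ratio can reach $(\kappa-1)/(2\sqrt{\kappa})$, where $\kappa$ is the condition number of $\Sigma$. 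Any repair, whitening included, must therefore pay a $\kappa$-dependent constant. Stage~2 only guarantees some unspecified $\zeta,\xi$, not $\kappa\approx 1$.

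Your second fallback is the right fix. Add a near-isotropy hypothesis, e.g.\ $\frac{1}{\xi^2\zeta^2}\leqslant 1.1$, so that $\|\hB^\top\Sigma v_t\|\leqslant 0.1\frac{\zeta^2}{d}\|v_t\|$. Your three estimates with $\epsilon=0.01$ then give $\frac{0.11}{0.99}\leqslant 0.12$.

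One further correction. Conditioning on $\hB$ does not justify applying Bernstein to the rows of $\Psi_t$, because $\hB$ is computed from $\Psi_t$ and the rows are not i.i.d.\ given $\hB$. You need one of the following:
\begin{enumerate}
\item the sample split you mention (the paper silently treats $\hB$ as fixed); or
\item a bound on $\|\frac{1}{K}\Psi_t^\top\Psi_t-\Sigma\|$ uniform over $S^{d-1}$, which replaces $r$ by $d$ in the probability.
\end{enumerate}
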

\begin{proof}
    Consider a fixed $z \in S^{r-1}$, we have
    \[z^\top\hB^\top \Psi_t^\top \Psi_t \hB z = \sum^{K}_{k=1}z^\top\hB^\top\psi_{tk}\psi_{tk}^\top\hB z\]
    Let $X:=\psi_{tk}^\top\hB z$. From Proposition~\ref{prop:subg and others}, $X$ is a sub-Gaussian random variable with $\|X\|_{\psi_2} \leqslant C$. Then $X^2$ is sub-exponential random variable with $\|X^2\|_{\psi_1} \leqslant C\|X\|^2_{\psi_2} \leqslant C$. The summands are independent sub-exponential random variables with sub-exponential norm $\tau_k \leqslant C$.
    \begin{align}\label{eq:exp_Thetah}
    \mathbb{E}[z^\top\hB^\top\psi_{tk}\psi_{tk}^\top\hB z] = z^\top\hB^\top\mathbb{E}[\psi_{tk}\psi_{tk}^\top]\hB z \geq \frac{\zeta^2}{d}.
    \end{align}
    By sub-exponential Bernstein inequality, with $g = \epsilon_1K$, where $\epsilon_1 \leqslant 1$. Then, 
    \[
    \frac{g^2}{\sum^{K}_{k=1}\tau_k^2}\geq \frac{\epsilon_1^2K^2}{C^2K}=c\epsilon_1^2K
    \]
    \[
    \frac{g}{\max_k\tau_k} \geq \frac{\epsilon_1K}{C} = c\epsilon_1 K.
    \]
   For a fixed $z$, applying the Bernstein inequality and Eq.~\eqref{eq:exp_Thetah},
    \begin{align*}
          &\mathbb{P}\Bigl( \sum^{K}_{k=1}z^\top\hB^\top\psi_{tk}\psi_{tk}^\top\hB z-\frac{\zeta^2K}{d} \leqslant -\epsilon_1K \Bigr) \\& \leqslant \mathbb{P}\Bigl( \sum^{K}_{k=1}z^\top\hB^\top\psi_{tk}\psi_{tk}^\top\hB z - \mathbb{E}[\cdot] \leqslant -\epsilon_1K\Bigr)\leqslant \exp{\Bigl(-c\epsilon_1^2K\Bigr)},
    \end{align*}
    where $\mathbb{E}[\cdot]$ is the expected value of the first term.
    Using epsilon-net argument, by setting $\epsilon_1=0.1\frac{\zeta^2}{d}$, we can conclude that with probability $1-\exp(r-c\frac{\zeta^4}{d^2}K)$, $\min_{z\in S^{d-1}}z^\top\hB^\top \Psi_t^\top \Psi_t \hB z \geq 0.9\frac{\zeta^2}{d}K$.
    \begin{align}
   \bigl\| \bigl( \hB^\top \Psi_t^\top \Psi_t \hB \bigr)^{-1} \bigr\|
    &= \hspace{-1 mm}\frac{1}{\sigma_{\min}\!\left( \hB^\top \Psi_t^\top \Psi_t \hB \right)}\le \frac{1}{0.9\frac{\zeta^2}{d}K}.\label{eq:inv_term}
    \end{align}
    Consider $z \in S^{r-1}$, we have $    z^\top\hB^\top\Psi_t^\top\Psi_t(I-\hB\hB^\top)\Bstar w_t^{\star} $
    \[
= \sum^{K}_{k=1}\bigl(\psi_{tk}^\top\hB z\bigr)^\top\bigl(\psi_{tk}^\top(I-\hB\hB^\top)\Bstar w_t^{\star}\bigr).
    \]
    By Proposition~\ref{prop:subg and others}, we have 
    \begin{align*}
        \|\psi^\top\hB z\|_{\psi_2} &\leqslant \|\hB z\|\cdot\|\psi\|_{\psi_2} \leqslant C\text{~and}\\
        \|\psi^\top(I-\hB\hB^\top)\Bstar w_t^{\star}\|_{\psi_2} &\leqslant C\|(I-\hB\hB^\top)\Bstar w_t^{\star}\|.
    \end{align*}
    As a result, the summands are independent sub-exponential random variables with sub-exponential norm $\tau_k \leqslant \|\psi^\top\hB z\|_{\psi_2}\|\psi^\top(I-\hB\hB^\top)\Bstar w_t^{\star}\|_{\psi_2} \leqslant C\|(I-\hB\hB^\top)\Bstar w_t^{\star}\|$.

 Applying sub-exponential Bernstein inequality with $g = \epsilon_2K\|(I-\hB\hB^\top)\Bstar w_t^{\star}\|$, we have
    \begin{align*}
         &\mathbb{E}[\sum^{K}_{k=1}\bigl(\psi_{tk}^\top\hB z\bigr)^\top\bigl(\psi_{tk}^\top(I-\hB\hB^\top)\Bstar w_t^{\star}] \\
         &= \sum^{K}_{k=1}z^\top\hB^\top\mathbb{E}[\psi_{tk}\psi_{tk}^\top](I-\hB\hB^\top)\Bstar w_t^{\star} \\
         &\leqslant \frac{1}{d\xi^2}\sum^{K}_{k=1}z^\top\hB^\top(I-\hB\hB^\top)\Bstar w_t^{\star}= 0.
    \end{align*}
    Further,
    \[
\frac{g^2}{\sum^{K}_{k=1}\tau_k^2} \geq \frac{\epsilon_2^2K^2\|(I-\hB\hB^\top)\Bstar w_t^{\star}\|^2}{KC^2\|(I-\hB\hB^\top)\Bstar w_t^{\star}\|^2} = c\epsilon_2^2K
    \]
    \[
        \frac{g}{\max_k\tau_k} \geq \frac{\epsilon_2K\|(I-\hB\hB^\top)\Bstar w_t^{\star}\|}{C\|(I-\hB\hB^\top)\Bstar w_t^{\star}\|} = c\epsilon_2K.
    \]
   For a fixed $z \in S^{r-1}$, we have
    \[
    \mathbb{P}\Bigl(z^\top\hB^\top\Psi_t^\top\Psi_t(I-\hB\hB^\top)\Bstar w_t^{\star} \geq g\Bigr) \leqslant \exp{\Bigl(-c\epsilon_2^2K\Bigr)}.
    \]
    Using epsilon-net argument, setting $\epsilon_2 = 0.1\frac{\zeta^2}{d}$, with probability at least $1-\exp{(r-c\frac{\zeta^4}{d^2}K)}$, we have
    \begin{align}
    &\bigl\| \hB^\top \Psi_t^\top \Psi_t (I - \hB \hB^\top)\Bstar w_t^\star \bigr\| \\
    &= \max_{z \in S^{d-1}}
        z^\top\hB^\top\Psi_t^\top\Psi_t(I-\hB\hB^\top)\Bstar w_t^{\star}\nn \\
    &\le 0.1\frac{\zeta^2}{d}K
        \bigl\| (I-\hB\hB^\top)\Bstar w_t^{\star} \bigr\|.\label{eq:I_term}
    \end{align}
    Combining Eqs.~\eqref{eq:inv_term} and~\eqref{eq:I_term}, we have with probability at least $1-2\exp(r-c\frac{\zeta^4}{d^2}K)$
    \begin{align*}
        &\bigl\|
    \bigl( \hB^\top \Psi_t^\top \Psi_t \hB \bigr)^{-1}
    \hB^\top \Psi_t^\top \Psi_t (I - \hB \hB^\top) B^\star w_t^\star
\bigr\| \\&\leqslant \bigl\|\bigl(\hB^\top \Psi_t^\top \Psi_t \hB \bigr)^{-1}\bigr\| \cdot \bigl\|\hB^\top \Psi_t^\top \Psi_t (I - \hB \hB^\top) B^\star w_t^\star\bigr\| \\
    &\le \frac{1}{\bigl(\frac{\zeta^2}{d}-\epsilon_1\big)K} \times \epsilon_2K
        \bigl\| (I-\hB\hB^\top)\Bstar w_t^{\star} \bigr\| \\
    &= \frac{0.1\frac{\zeta^2}{d}}{\frac{\zeta^2}{d}-0.1\frac{\zeta^2}{d}}\bigl\| (I-\hB\hB^\top)\Bstar w_t^{\star} \bigr\| \\
    &\leqslant 0.12 \bigl\| (I-\hB\hB^\top)\Bstar w_t^{\star} \bigr\|.
    \end{align*}
    This completes the proof.
\end{proof}
We are now ready to bound the estimation error.
\begin{theorem}
\label{theorem:columnwise_error}
    Assume Assumption \ref{low rank} holds and consider the estimated $\widehat{B}$, $\widehat{W}$ and random vectors $\psi_{tk}$s from Algorithm~\ref{alg:whole}. If the number of samples $K \geq C\frac{r(T+d)d^2}{\delta_0^2\sigmin^{*2}\zeta^4}$, then with probability at least $1-3d^{-10}$,
    \[
        \|\hB\hat{w_t}-\Bstar \wstar_t\| \leqslant 1.12\delta_0\sqrt{d}.
    \]
\end{theorem}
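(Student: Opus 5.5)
The plan is to decompose the column-wise error of the least-squares step and then combine Theorem~\ref{theorem:initial} with Lemma~\ref{lemma:intermidiate_error}. Since the observations are noiseless, $Y_t = \Psi_t \Bstar \wstar_t$. Provided $\Psi_t\hB$ has full column rank (which holds on the event of Lemma~\ref{lemma:intermidiate_error}, because $\hB^\top\Psi_t^\top\Psi_t\hB \succeq 0.9\frac{\zeta^2}{d}K I$), the least-squares update is
\[
\widehat{w}_t = (\hB^\top\Psi_t^\top\Psi_t\hB)^{-1}\hB^\top\Psi_t^\top\Psi_t\Bstar\wstar_t .
\]
I will split $\Bstar\wstar_t = \hB\hB^\top\Bstar\wstar_t + (I-\hB\hB^\top)\Bstar\wstar_t$. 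The first piece is reproduced exactly by the least-squares step, which gives
\[
\widehat{w}_t - \hB^\top\Bstar\wstar_t = (\hB^\top\Psi_t^\top\Psi_t\hB)^{-1}\hB^\top\Psi_t^\top\Psi_t(I-\hB\hB^\top)\Bstar\wstar_t .
\]

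Next I will write the total error as
\[
\hB\widehat{w}_t - \Bstar\wstar_t = \hB(\widehat{w}_t - \hB^\top\Bstar\wstar_t) - (I-\hB\hB^\top)\Bstar\wstar_t .
\]
Because $\hB$ has orthonormal columns, the triangle inequality combined with Lemma~\ref{lemma:intermidiate_error} yields
\[
\|\hB\widehat{w}_t-\Bstar\wstar_t\| \le 1.12\,\|(I-\hB\hB^\top)\Bstar\wstar_t\|.
\]
I will then bound $\|(I-\hB\hB^\top)\Bstar\wstar_t\| \le \text{SD}(\hB,\Bstar)\,\|\wstar_t\|$. Theorem~\ref{theorem:initial} gives $\text{SD}(\hB,\Bstar)\le\delta_0$, and $\|\wstar_t\| = \|\Bstar\wstar_t\| = \|\thetas_t\| \le \sqrt{d}$ by the linear MDP assumption. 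Together these give the claimed bound $1.12\delta_0\sqrt{d}$.

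The remaining work is probability bookkeeping, and I expect this to be the main (if routine) obstacle. One issue is a subtlety I will gloss over as the paper does: Lemma~\ref{lemma:intermidiate_error} treats $\hB$ as fixed even though it depends on $\Psi_t$. Setting that aside, the plan is a union bound. Theorem~\ref{theorem:initial} holds with failure probability $\exp(T+d-cK\delta_0^2\zeta^4\sigmin^{\star2}/d^2)$. Taking $K \ge C\frac{r(T+d)d^2}{\delta_0^2\sigmin^{\star2}\zeta^4}$ with $C$ large makes the exponent at most $-(T+d)$, and hence at most $-10\log d$, so this failure probability is at most $d^{-10}$. Lemma~\ref{lemma:intermidiate_error} has failure probability $2\exp(r-c\zeta^4K/d^2)$. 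Since $\delta_0\le 0.1$, the same $K$ dominates $r d^2/\zeta^4$ as long as $\sigmin^{\star}$ is suitably bounded (I will assume $\sigmin^{\star}\lesssim\sqrt{T+d}/\delta_0$, or otherwise absorb this into $C$), which gives failure at most $2d^{-10}$. The union bound then gives success probability at least $1-3d^{-10}$.
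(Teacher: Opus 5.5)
Your proposal is correct and follows the paper's proof essentially line for line. You use the same split $\Bstar\wstar_t=\hB\hB^\top\Bstar\wstar_t+(I-\hB\hB^\top)\Bstar\wstar_t$ inside the least-squares formula, Lemma~\ref{lemma:intermidiate_error} for the factor $0.12$, Theorem~\ref{theorem:initial} with $\|\thetas_t\|\le\sqrt{d}$, and a union bound over the three failure events. Your caveat that the stated $K$ covers the $r d^2/\zeta^4$ requirement only when $\sigma_{\min}^\star\lesssim\sqrt{T+d}/\delta_0$, and your note that $\hB$ depends on $\Psi_t$, flag steps that the paper's ``combining these results'' passes over silently, so you are if anything more careful than the source.
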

\begin{proof}
Based on least square estimation of $\hat{w}_t$
\begin{align*}
\hat{w}_t
&= \big( \hB^\top \Psi_t^\top \Psi_t \hB \big)^{-1}
    (\Psi_t \hB)^\top Y_t
    \end{align*}
    \begin{align*}
&= \big( \hB^\top \Psi_t^\top \Psi_t \hB \big)^{-1}
    (\Psi_t \hB)^\top
    \big( \Psi_t B^\star w_t^\star \big) \\
&= \big( \hB^\top \Psi_t^\top \Psi_t \hB \big)^{-1}
    \hB^\top \Psi_t^\top \Psi_t \hB \hB^\top B^\star w_t^\star\\
    &+ \big( \hB^\top \Psi_t^\top \Psi_t \hB \big)^{-1}
    \hB^\top \Psi_t^\top \Psi_t
    (I - \hB \hB^\top) B^\star w_t^\star 
            \end{align*}
    \begin{align*}
&= \hB^\top B^\star w_t^\star \hspace{-1 mm}+\hspace{-1 mm} \big( \hB^\top \Psi_t^\top \Psi_t \hB \big)^{-1}
    \hB^\top \Psi_t^\top \Psi_t
    (I - \hB \hB^\top) B^\star w_t^\star .
\end{align*}
Applying $\hB$ to both sides, we have $\hB \hat{w}_t$
\begin{align*}
&= \hB \hB^\top B^\star w_t^\star \\ &+ \hB \big( \hB^\top \Psi_t^\top \Psi_t \hB \big)^{-1}\hB^\top \Psi_t^\top \Psi_t\big(I - \hB \hB^\top\big) B^\star w_t^\star \\
&= B^\star w_t^\star + \big(\hB\hB^\top-I\big)B^\star w_t^\star\\ &+ \hB \big( \hB^\top \Psi_t^\top \Psi_t \hB \big)^{-1}\hB^\top \Psi_t^\top \Psi_t\big(I - \hB \hB^\top\big) B^\star w_t^\star
\end{align*}
As a result,
\begin{align*}
&\bigl\| \hB \hat{w}_t - B^\star w_t^\star \bigr\|
= \bigl\|(\hB \hB^\top - I) B^\star w_t^\star \\& + \hB \bigl( \hB^\top \Psi_t^\top \Psi_t \hB \bigr)^{-1}
    \hB^\top \Psi_t^\top \Psi_t
    (I - \hB \hB^\top) B^\star w_t^\star \bigr\| \\
&\le
\bigl\| (\hB \hB^\top - I) B^\star w_t^\star \bigr\| \\&+ \bigl\|
    \hB \bigl( \hB^\top \Psi_t^\top \Psi_t \hB \bigr)^{-1}
    \hB^\top \Psi_t^\top \Psi_t
    (I - \hB \hB^\top) B^\star w_t^\star
\bigr\| \\
&\le
\bigl\| (\hB \hB^\top - I) B^\star w_t^\star \bigr\| \\& + \bigl\|
    \bigl( \hB^\top \Psi_t^\top \Psi_t \hB \bigr)^{-1}
    \hB^\top \Psi_t^\top \Psi_t (I - \hB \hB^\top) B^\star w_t^\star\bigr\| \\
&\le \bigl\| (\hB \hB^\top - I) B^\star w_t^\star \bigr\| + 0.12\bigl\| (\hB \hB^\top - I) B^\star w_t^\star \bigr\|\\
& \le 1.12\delta_0\bigl\|\thetas_t\bigr\| \leqslant 1.12\delta_0\sqrt{d}
\end{align*}
with probability $1-2\exp(r-c\frac{\zeta^4}{d^2}K)-\exp(T+d-c\delta_0^2 K\frac{\zeta^4}{d^2}\sigmin^{\star2})$. To ensure that the probability is at least $1-3d^{-10}$, each exponential term must be smaller than or equal to $d^{-10}$. We have,
\[
    r-c\frac{\zeta^4}{d^2}K \leqslant -10\log(d) \Rightarrow K \geq C\frac{d^2r}{\zeta^4}
\]
\[
    T+d-c\delta_0^2 K\frac{\zeta^4}{d^2}\sigmin^{\star2} \leqslant -10\log(d) \Rightarrow K \geq C\frac{(T+d)d^2}{\delta_0^2\sigmin^{\star2}\zeta^4}
\]
Combining these results, we have $K \geq C\frac{r(T+d)d^2}{\delta_0^2\sigmin^{*2}\zeta^4}$.
\end{proof}
\begin{remark}
    Theorem~\ref{theorem:columnwise_error} reflects a fundamental trade-off between the number of samples required and the estimation error. As the desired subspace distance $\delta_0$ decreases, the column-wise estimation error also decreases, and the required number of samples will increase. 

    Since the feature vectors $\psi_{tk}$s are generated from the environment through the specified data collection process, rather than being freely designed, their distribution and properties directly impact the sample complexity. The weaker spectrum (the singular value of $\mathbb{E}[\psi\psi^\top]$ is scaled by $1/d$) introduces a $d^2$ increase in sample complexity. A constant upper bound for $\|\psi_{tk}\|$ also affects the sample complexity. 
    %{\cblue These are the two main reasons for difference in sample complexity compared to settings where the distribution of $\psi_{tk}$ can be more controlled.}
\end{remark}
\subsection{Regret Analysis}
We first present a lemma to bound the per-step regret and value error and then in Theorem~\ref{theorem:regret_bound} we present the cumulative regret bound of our algorithm.
\begin{lemma}
\label{lemma:regret_induction}
Let $\hat{V}^\Pi_t(s_h^n)$ be the value of task $t$ for a state at horizon $h$ and episode $n$ under policy $\Pi$ calculated from estimated $\hat{\R}_t$. Let \(\delta_{t,h,1}^n := V^{\star}_t(s_h^n) -\hat{V}^{\hat{\Pi}^{\star}(t)}_t(s_h^n)\) and \(\delta_{t,h,2}^n := \hat{V}^{\hat{\Pi}^{\star}(t)}_t(s_h^n) - V_t^{\hat{\Pi}^{\star}(t)}(s_h^n)\), $\text{Err} := \|\hB \hat{w}_t - B^\star w_t^\star \|$. Then,
    \[
        \delta_{t,h,1}^n \leqslant \text{Err}\cdot (H-h+1)\text{~and~}
        \delta_{t,h,2}^n \leqslant \text{Err}\cdot(H-h+1).
    \]
\end{lemma}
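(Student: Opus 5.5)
Both inequalities follow by backward induction on $h$, from $h=H$ down to $h=1$, using the Bellman equations for the true and the estimated rewards. The key pointwise fact is that for every $(s,a)$ and every step $h'$,
\[
|R_{h't}(s,a)-\hat{R}_{h't}(s,a)| = |\langle \thetas_{h't}-\hat{\theta}_{h't},\psi(s,a)\rangle| \leqslant \|\hB\hat{w}_t - \Bstar\wstar_t\|\,\|\psi(s,a)\| \leqslant \text{Err}.
\]
This uses Cauchy--Schwarz and $\|\psi\|\leqslant 1$. We interpret $\text{Err}$ as the maximum of the column-wise error over $h$, which Theorem~\ref{theorem:columnwise_error} bounds uniformly. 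The value $\hat{V}$ under a fixed policy satisfies the same Bellman recursion as $V$, with $\hat{R}$ replacing $R$ and the same kernel $\bP_h$. We treat $\hat{\Pi}^\star(t)=\check{\Pi}(\hat{\R}_t)$ as an exact optimal policy for $\hat{\R}_t$; any $\epsilon$ slack from the reward-free planner would appear as an additive term.

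\textbf{The term $\delta_{t,h,2}^n$.} This compares two evaluations of the same policy $\hat{\Pi}^\star(t)$. At step $H+1$ both values are zero. Suppose the bound holds at step $h+1$. Then
\[
\hat{V}^{\hat{\Pi}^\star}_t(s_h)-V^{\hat{\Pi}^\star}_t(s_h) = \mathbb{E}_{a\sim\hat{\pi}^\star_h}\Big[\hat{R}_{ht}(s_h,a)-R_{ht}(s_h,a) + \mathbb{E}_{s'\sim\bP_h(\cdot|s_h,a)}\big(\hat{V}-V\big)(s')\Big].
\]
The first term is at most $\text{Err}$ and the second is at most $\text{Err}(H-h)$ by the induction hypothesis. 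This gives $\text{Err}(H-h+1)$. Equivalently, one can unroll to $\mathbb{E}_{\hat{\Pi}^\star}\sum_{h'=h}^H(\hat{R}-R)$, which has $H-h+1$ terms.

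\textbf{The term $\delta_{t,h,1}^n$.} Use optimality of $\hat{\Pi}^\star(t)$ for the estimated rewards: $\hat{V}^{\hat{\Pi}^\star}_t(s)\geq \hat{V}^{\Pi^\star(t)}_t(s)$ for all $s$. Hence
\[
\delta_{t,h,1}^n \leqslant V^{\Pi^\star}_t(s_h^n)-\hat{V}^{\Pi^\star}_t(s_h^n),
\]
which is a same-policy comparison for the policy $\Pi^\star(t)$. The argument used for $\delta_{t,h,2}^n$, with the sign reversed, then gives at most $\text{Err}(H-h+1)$. If optimality only holds at the visited state, I would instead run the induction directly. At step $h$, write $V^\star(s)-\hat{V}^{\hat{\Pi}^\star}(s) \leqslant Q^\star(s,\pi^\star(s)) - \hat{Q}^{\hat{\Pi}^\star}(s,\pi^\star(s))$, using that $\hat{\pi}^\star_h$ maximizes $\hat{Q}$. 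Expanding both $Q$'s gives $\text{Err}$ plus the expected next-step $\delta_1$.

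\textbf{Main obstacle.} The delicate step is justifying that $\hat{\Pi}^\star(t)$ dominates $\Pi^\star(t)$ under $\hat{V}$ at every state and step. This is exactly what exact planning via $\check{\Pi}$ provides; with approximate planning an extra $\epsilon$ term appears. A second point to handle is that the column-wise bound must hold uniformly over $h\in[H]$. This needs a union bound over $h$ on top of Theorem~\ref{theorem:columnwise_error}, which is absorbed into the probability statement.
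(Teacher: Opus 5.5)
Your proposal is correct and matches the paper's proof: a backward induction from $h=H$ via the Bellman recursions, with the per-step reward gap bounded by $\text{Err}$ through Cauchy--Schwarz and $\|\psi\|\leqslant 1$. For $\delta_{t,h,1}^n$ the paper uses exactly your fallback, bounding $V^\star_t(s)-\hat{V}^{\hat{\Pi}^\star(t)}_t(s)$ by $Q^\star_t(s,a)-\hat{Q}^{\hat{\Pi}^\star(t)}_t(s,a)$ at the true optimal action $a$ because $\hat{a}$ maximizes $\hat{Q}^{\hat{\Pi}^\star(t)}_t$; your primary reduction to a same-policy comparison for $\Pi^\star(t)$ is just the unrolled form of that argument, and your notes on uniformity of $\text{Err}$ over $h$ and exact planning by $\check{\Pi}$ make explicit what the paper leaves implicit.
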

\begin{proof}
    We prove this lemma by induction. 
    For the base case, $h = H$, we need to prove that $\delta_{t,H,1}^n \leqslant \text{Err}$ and $\delta_{t,H,2}^n \leqslant \text{Err}$. Since \(V^{\star}_t(s_{H+1}^n) = 0, \hat{V}^{\hat{\Pi}^{\star}(t)}_t(s_{H+1}^n)=0\), let $a= \arg \max_a R(s_H^n,a)$ and $\hat{a}= \arg \max_{\hat{a}} \hat{R}(s_H^n,\hat{a})$,
    \begin{align*}
        \delta_{t,H,1}^n &= \max_a R(s_H^n,a) - \max_{\hat{a}} \hat{R}(s^n_H,\hat{a}) \\
        &\leqslant R(s_H^n,a) - \hat{R}(s^n_H,a)\\
        &= \theta_t^\top \psi(s_H^n,a) - \hat{\theta}_t^\top\psi(s_H^n,a) \\
        &= (\theta_t-\hat{\theta}_t)^\top \psi(s_H^n,a) \\
        &\leqslant \|\theta_t-\hat{\theta}_t\| \cdot\|\psi(s_H^n,a)\| 
        \leqslant \text{Err}.
    \end{align*}
    Similarly, for $\delta_{t,H,2}^n$, we have
    \begin{align*}
        \delta_{t,H,2}^n &= \hat{R}(s_H^n,\hat{a}) - R(s^n_H,\hat{a}) \\
        &= \hat{\theta}_t^\top\psi(s_H^n,\hat{a})-\theta_t^\top \psi(s_H^n,\hat{a})  
        %&= (\hat{\theta}_t-\theta_t)^\top \psi(s_H^n,\hat{a}) \\
        %&\leqslant \|\hat{\theta}_t-\theta_t\| \cdot\|\psi(s_H^n,\hat{a})\| \\
        \leqslant \text{Err}.
    \end{align*}
    By the induction assumption, we assume that $\delta_{t,h+1,1}^n \leqslant \text{Err}(H-h)$ and $\delta_{t,h+1,2}^n \leqslant \text{Err}(H-h)$. To prove the induction hypotheses, we have
    \begin{align*}
        \delta_{t,h,1}^n &= V^{\star}_t(s_h^n) -\hat{V}^{\hat{\Pi}^{\star}(t)}_t(s_h^n) \\
        & = \scalemath{0.85}{Q^{\star}_t(s_h^n,a_h^n) -\hat{Q}^{\hat{\Pi}^{\star}(t)}_t(s_h^n,\hat{a}_h^n)
        \leqslant Q^{\star}_t(s_h^n,a_h^n) -\hat{Q}^{\hat{\Pi}^{\star}(t)}_t(s_h^n,a_h^n)} \\
        &= R(s_h^n,a_h^n) + \sum_{s_{h+1}^n \in \S}\mathbb{P}(s_{h+1}^n|s_h^n,a_h^n)V_t^{\star}(s_{h+1}^n) \\&- \hat{R}(s_h^n,a_h^n) - \sum_{s_{h+1}^n \in \S}\mathbb{P}(s_{h+1}^n|s_h^n,a_h^n)\hat{V}_t^{\hat{\Pi}^{\star}(t)}(s_{h+1}^n) \\
        &= \sum_{s_{h+1}^n \in \S}\mathbb{P}(s_{h+1}^n|s_h^n,a_h^n)\Bigl(V_t^{\star}(s_{h+1}^n)-\hat{V}_t^{\hat{\Pi}^{\star}(t)}(s_{h+1}^n)\Bigl) \\&+ (R(s_h^n,a_h^n)-\hat{R}(s_h^n,a_h^n)) \\
        & \leqslant \text{Err}(H-h) + \text{Err} = \text{Err}(H-h+1)
    \end{align*}
    Following the same steps, we can show
    $$\delta_{t,h,2}^n \leqslant \text{Err}(H-h+1).$$
    This completes the proof.
    % Similarly,
    % \begin{align*}
    %     \delta_{t,h,2}^n &= \hat{V}^{\hat{\Pi}^{\star}(t)}_t(s_h^n) - V_t^{\hat{\Pi}^{\star}(t)}(s_h^n) \\
    %     & = \hat{Q}^{\hat{\Pi}^{\star}(t)}_t(s_h^n,\hat{a}_h^n) - Q^{\hat{\Pi}^{\star}(t)}_t(s_h^n,\hat{a}_h^n) \\
    %     &= \hat{r}(s_1^n,\hat{a}_h^n) + \sum_{\forall s_{h+1}^n}\mathbb{P}(s_{h+1}^n|s_h^n,\hat{a}_h^n)\hat{V}_t^{\hat{\Pi}^{\star}(t)}(s_{h+1}^n) \\&- r(s_h^n,\hat{a}_h^n) - \sum_{\forall s_{h+1}^n}\mathbb{P}(s_{h+1}^n|s_h^n,\hat{a}_h^n)V_t^{\hat{\Pi}^{\star}(t)}(s_{h+1}^n) \\
    %     &= \sum_{\forall s_{h+1}^n}\mathbb{P}(s_{h+1}^n|s_h^n,\hat{a}_h^n)\Bigr(\hat{V}_t^{\hat{\Pi}^{\star}(t)}(s_{h+1}^n)-V_t^{\hat{\Pi}^{\star}(t)}(s_{h+1}^n)\Bigl) \\&+ (\hat{r}-r)(s_h^n,\hat{a}_h^n) \\
    %     & \leq \text{Err}(H-h) + \text{error} \\
    %     &= \text{Err}(H-h+1).
    % \end{align*}   
\end{proof}
\begin{figure*}[t!]
    \centering
    % First subfigure
    \begin{subfigure}[b]{0.97\textwidth}
        \centering
        \includegraphics[width=\textwidth, height = 3.2cm]{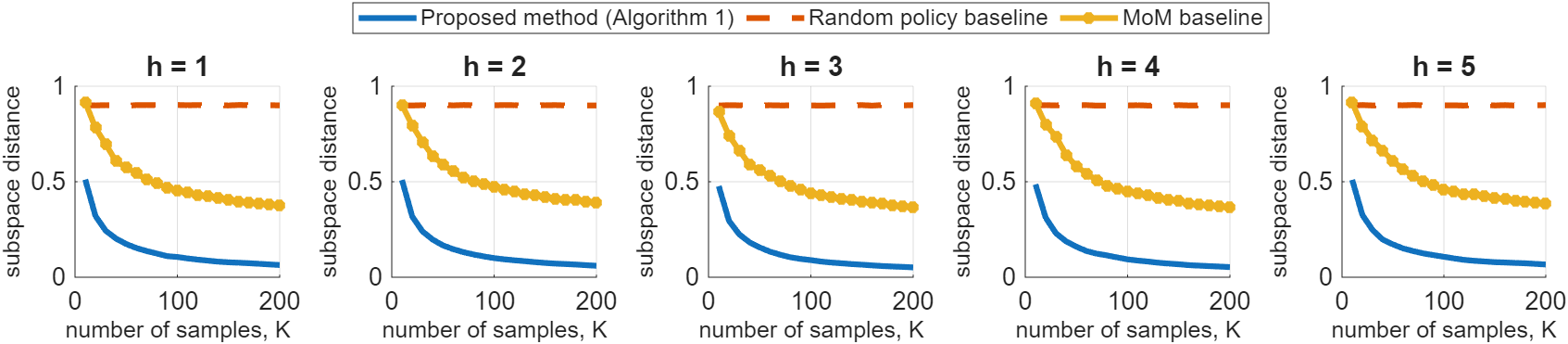}
        \caption{Subspace distance as a function of number of samples $K$. Subspace distance is defined as $\SD(\hB_h, B_h^\star)$.}
        \label{fig:1a}
    \end{subfigure}
    %\hfill
    \begin{subfigure}[b]{0.97\textwidth}
        \centering
        \includegraphics[width=\textwidth, height = 3.3cm]{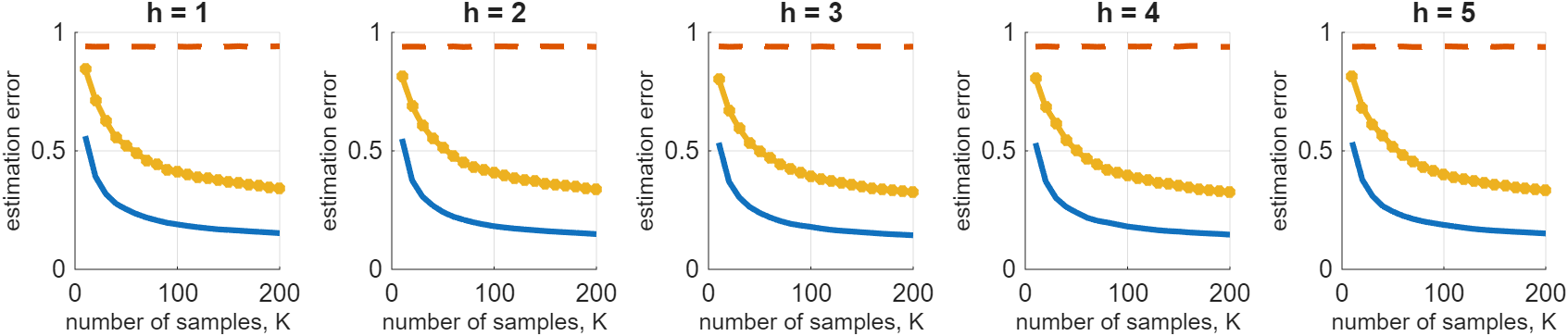}%\vspace{-1 mm}
        \caption{Estimation error as a function of the number of samples $K$. Estimation error is defined as $\|\widehat{\Theta}_h-\Theta^\star_h\|_\text{F}/\|\Theta^\star_h\|_\text{F}$}
        \label{fig:1b}
    \end{subfigure}
    \caption{{\em Results for Experiment~1:} We set $d=100, T=100,r=2,|\S|=1000,|\A|=10.$ Random policy baseline replaces stage~2 in Algorithm~\ref{alg:whole} with a random policy that chooses each action uniformly. MoM baseline replaces stage~3 in Algorithm~\ref{alg:whole} with an MoM estimator. Our proposed approach outperforms both the baselines.}
    % 'd=100, T=100, r=2, S=1000, A=10'
    \label{fig:error}
\end{figure*}
\begin{figure*}[t!]
\vspace{-2 mm}
    \begin{subfigure}[t]{0.32\textwidth}
        \centering
        \includegraphics[width=\textwidth,height=3.9cm]{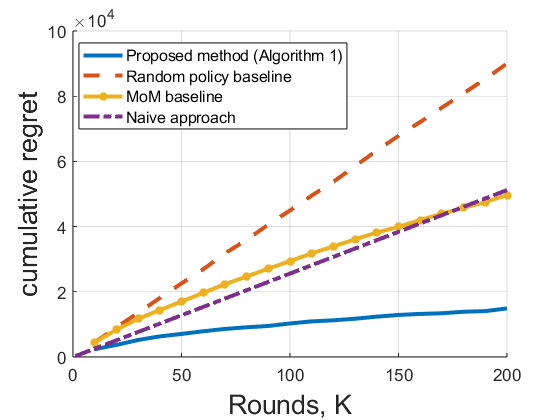}%\vspace{-1 mm}
        \caption{}
        \label{fig:2a}
    \end{subfigure}
    \hfill
    \begin{subfigure}[t]{0.32\textwidth}
        \centering
        \includegraphics[width=\textwidth,height=3.9cm]{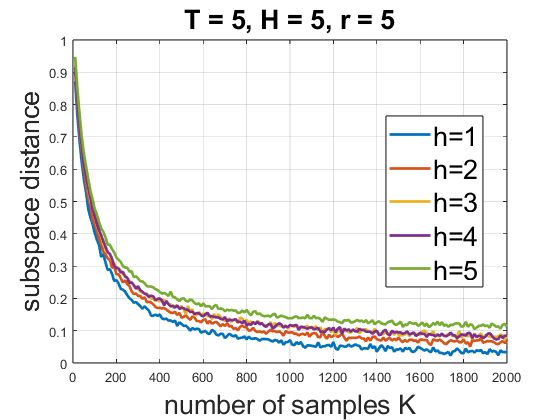}%\vspace{-1 mm}
        \caption{}
        \label{fig:2b}
    \end{subfigure}
    \hfill
    \begin{subfigure}[t]{0.32\textwidth}
        \centering
        \includegraphics[width=\textwidth,height=3.9cm]{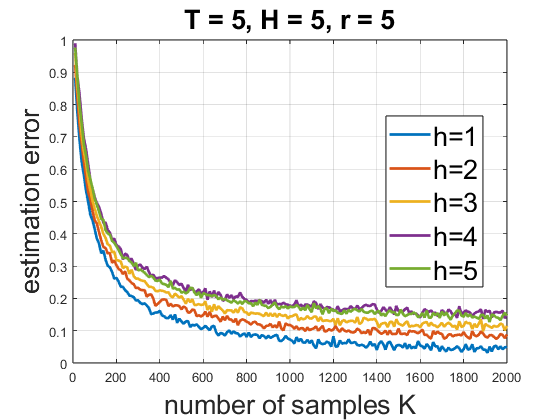}%\vspace{-1 mm}
        \caption{}
        \label{fig:2c}
    \end{subfigure}
    \vspace{-2 mm}
    \caption{{\em Results for Experiment~1:} Figure~\ref{fig:2a} presents regret plots for Experiment~1 of the proposed algorithm against the three baseline approaches. {\em Results for Experiment~2:}
     Figure~\ref{fig:2b} and~\ref{fig:2c} present results for the grid maze environment. The five goal points for five tasks are (1,1), (2,2), (3,3), (4,4), (5,5), respectively.  Figure~\ref{fig:2b} presents subspace distance vs. number of samples $K$. Fig.~\ref{fig:2c} presents the estimation error vs. number of samples $K$. Estimation error is defined as $\|\widehat{\Theta}_h-\Theta^\star_h\|_\text{F}/\|\Theta^\star_h\|_\text{F}$.}
    \label{fig:maze}
\end{figure*}
\begin{theorem}\label{theorem:regret_bound}
The regret of the MTRL-RL algorithm in Algorithm~\ref{alg:whole} for $T$ tasks, and $N$ rounds is given by
\begin{align*}
\text{Reg}(N,T) &= \sum_{n=1}^{N} \sum_{t=1}^{T} V^{\star}_t(s_1^n) - V_t^{\hat{\Pi}^{\star}(t)}(s_1^n) = O(NTH\sqrt{d}\delta_0).
\end{align*}
\end{theorem}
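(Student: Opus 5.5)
The plan is to decompose each per-round, per-task regret term through the estimated value function and then apply Lemma~\ref{lemma:regret_induction} at $h=1$. For fixed $n\in[N]$ and $t\in[T]$, I would write
\begin{align*}
V^{\star}_t(s_1^n) - V_t^{\hat{\Pi}^{\star}(t)}(s_1^n)
&= \Bigl(V^{\star}_t(s_1^n) - \hat{V}^{\hat{\Pi}^{\star}(t)}_t(s_1^n)\Bigr) + \Bigl(\hat{V}^{\hat{\Pi}^{\star}(t)}_t(s_1^n) - V_t^{\hat{\Pi}^{\star}(t)}(s_1^n)\Bigr)\\
&= \delta_{t,1,1}^n + \delta_{t,1,2}^n .
\end{align*}
Lemma~\ref{lemma:regret_induction} with $h=1$ bounds each of these two terms by $\text{Err}\cdot H$. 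The per-term regret is therefore at most $2H\cdot\text{Err}$, where $\text{Err}=\|\hB\hat{w}_t-\Bstar\wstar_t\|$.

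Next I would control $\text{Err}$ uniformly over the steps $h\in[H]$ and tasks $t\in[T]$ using Theorem~\ref{theorem:columnwise_error}. Provided $K \geq C\frac{r(T+d)d^2}{\delta_0^2\sigmin^{\star2}\zeta^4}$, that theorem gives $\|\hB_h\hat{w}_{ht}-\Bstar_h\wstar_{ht}\|\leqslant 1.12\delta_0\sqrt{d}$ with high probability. Some care is needed here. The induction in Lemma~\ref{lemma:regret_induction} uses the reward error at every step $h$, so $\text{Err}$ must be read as $\max_h\|\hat{\theta}_{ht}-\thetas_{ht}\|$. The column-wise bound must also hold simultaneously for all $t$ and $h$. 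I would handle this with a union bound over the $TH$ events, giving failure probability at most $3THd^{-10}$. If needed, the exponent $10$ can be replaced by a larger constant, enlarging $C$ in the sample-size condition only by a log factor. On this good event, each per-term regret is at most $2.24H\sqrt{d}\,\delta_0$.

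Summing over $n\in[N]$ and $t\in[T]$ then gives
\[
\text{Reg}(N,T)\leqslant \sum_{n=1}^N\sum_{t=1}^T 2.24H\sqrt{d}\,\delta_0 = O(NTH\sqrt{d}\delta_0).
\]
This bound holds with high probability, which is the sense in which the $O(\cdot)$ statement should be read.

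The main obstacle is a conceptual point in applying the lemma. $\hat{\Pi}^\star(t)=\check{\Pi}(\hat{\mathcal{R}}_t)$ is produced by the reward-free oracle, whereas the induction in Lemma~\ref{lemma:regret_induction} treats it as exactly optimal for $\hat{\mathcal{R}}_t$; that exactness is what justifies the step $\hat{Q}(s,\hat a)\geq\hat{Q}(s,a)$. If $\check{\Pi}$ is only $\epsilon$-optimal, I would add an $\epsilon$ slack per term and take $\epsilon\lesssim H\sqrt{d}\delta_0$ in stage~1 so that the order is unchanged. Apart from this, and the union bound over $h$ and $t$, the argument is a direct combination of Lemma~\ref{lemma:regret_induction} and Theorem~\ref{theorem:columnwise_error}.
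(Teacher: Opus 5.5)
Your proposal is correct and matches the paper's proof. It uses the same split through $\hat{V}^{\hat{\Pi}^{\star}(t)}_t$, applies Lemma~\ref{lemma:regret_induction} at $h=1$ to get $2H\cdot\text{Err}$ per term, and invokes Theorem~\ref{theorem:columnwise_error} for $\text{Err}\leqslant 1.12\delta_0\sqrt{d}$; the paper simply rounds $2\times 1.12$ up to $2.5$. The paper leaves several details implicit that you handle correctly: reading $\text{Err}$ as a maximum over $h$, the union bound over the $TH$ events with the resulting high-probability reading, and the requirement that $\check{\Pi}$ return an exactly optimal policy.
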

\begin{proof}
Using the definition of regret,
    \begin{align*}
        &\text{Reg}(N,T) = \sum_{n=1}^{N} \sum_{t=1}^{T} \Big(V^{\star}_t(s_1^n) - V_t^{\hat{\Pi}^{\star}(t)}(s_1^n)\Big) \\
        &= \scalemath{0.95}{\sum_{n=1}^{N} \sum_{t=1}^{T}\Big( V^{\star}_t(s_1^n) -\hat{V}^{\hat{\Pi}^{\star}(t)}_t(s_1^n)+\hat{V}^{\hat{\Pi}^{\star}(t)}_t(s_1^n) - V_t^{\hat{\Pi}^{\star}(t)}(s_1^n) \Big)} 
        \end{align*}
        \begin{align*}
        &\leqslant \sum_{n=1}^{N} \sum_{t=1}^{T}\Big(\delta_{t,1,1}^n +\delta_{t,1,2}^n\Big) \leqslant 2NTH\cdot \text{Err} \leqslant 2.5NTH\sqrt{d}\delta_0.
    \end{align*}
    This completes the proof.
\end{proof}
\begin{remark}
   Theorem~\ref{theorem:regret_bound} shows that the cumulative regret is controlled by the estimation error. In particular, if the error term is zero, i.e., $\delta_0$, then the bound implies $\text{Regret}(N,T) = 0$, and the learned policy $\hat{\Pi}^*$ is optimal. In practice, however, due to statistical uncertainty from finite samples, zero regret is not available. The regret bound implies that the learned policy is $\epsilon$-optimal with \(\epsilon = \mathcal{O}(H\sqrt{d}\delta_0)\).
\end{remark}
\section{Numerical Analysis}\label{sec: Simulations}
In this section, we present numerical experiments to verify the performance of our proposed algorithm. We evaluate our method on a simulated multi-task control environment and a grid maze navigation problem. 
To compare our approach against baselines, we use three metrics. The subspace distance of the latent representation, $\SD(\hB_h, B_h^\star)$, the estimation error of the reward parameter matrix, $\|\widehat{\Theta}_h-\Theta^\star_h\|_\text{F}/\|\Theta^\star_h\|_\text{F}$, and the regret of the learned policy. To compute the regret we evaluate the optimal policy and the value vector using the true reward model $R_{ht}(s,a) = \langle\thetas_{ht},\psi(s,a)\rangle$ and the estimated model $\hat{R}_{ht}(s,a) = \langle\hat{\theta}_{ht},\psi(s,a)\rangle$.
% {\cblue We use value iteration to calculate the optimal policy for any given $\R$.} 
All results are averaged over 100 independent trials. 
% In the third experiment, we alter the parameters $d, T ,$ and $|\S|$. All results are averaged over 100 independent trials. 
%
%\subsection{Results and Analysis}
\subsubsection{Experiment 1}
We evaluated our method on a simulated multi-task control environment. We set $d=100, T=100,r=2,|\S|=1000,|\A|=10$. The $B^\star_h$s, for $h\in[H]$, is generated by orthonormalizing i.i.d. standard Gaussian matrix. The $W^\star_h$s are generated from i.i.d. Gaussian distribution. For all $(s,a)$ pairs, $\phi(s,a)$ are generated from i.i.d. Gaussian distribution and normalized by $\ell_1$ norm. For each state $s \in \S$, we randomly select half of the actions $a\in \A$ and their embedding $\psi(s,a)$ is generated from a Gaussian distribution with zeros mean and covariance $\frac{1}{\sqrt{d}}I$. For the remaining $(s,a)$ pair, we set their embedding $\psi(s,a)$ to be $e_1$. The intuition is to mimic that in many real-life RL environments, some $\psi(s,a)$s are nicely formulated while some other $\psi(s,a)$s are poorly constructed ($e_1$ in this case).

Recall that Algorithm~\ref{alg:whole} contains four stages. We considered three baselines: (i) a random policy baseline that replaces stage~2 with a random policy that chooses each action uniformly, rather than employing a specifically designed exploration policy, (ii) an MoM baseline that replaces stage~3 with an MoM estimator, and (iii) a naive approach baseline that independently solves $T$ tasks by Thompson sampling (TS). The random policy baseline is designed to illustrate the importance of designing the exploration policy $\hat{\Pi}$ in Algorithm~\ref{alg:whole} (stage~2). The MoM estimator serves as a baseline for our representation learning approach. It uses exploration policy $\hat{\Pi}$ from Algorithm~\ref{alg:whole} and estimates $\widehat{B}$ by calculating the top-$r$ singular vector of $\widehat{\Theta} = \frac{1}{KT}\sum_{k=1,t=1}^{K,T}y_{tk}^2\psi_{tk}\psi_{tk}^\top$ \cite{tripuraneni2021provable}. The naive approach serves as a baseline for solving the tasks independently rather than jointly. 

From Figures \ref{fig:error} and \ref{fig:2a}, our proposed method consistently outperforms the three benchmarks. Using a uniform policy leads to poor estimation performance. The underlying reason is that random sampling collects degenerate features $\psi(s,a) = e_1$, causing the empirical covariance to be nearly singular, illustrating the importance of finding an appropriate exploration policy $\hat{\Pi}$.
The key advantage of finding the exploration policy $\hat{\Pi}$ lies in its ability to generate informative feature distributions, which is essential for accurate low-rank matrix recovery. This highlights that representation learning in RL is fundamentally limited by the quality of collected data, rather than the estimation procedure alone.

\subsubsection{Experiment 2}
% \begin{figure}[t!]
%     \centering
%     % First subfigure
%     \begin{subfigure}[b]{0.45\textwidth}
%         \centering
%         \includegraphics[width=\textwidth,height=4cm]{SD_maze.png}
%         \caption{Subspace distance as a function of number of samples $K$}
%         \label{fig:3a}
%     \end{subfigure}
%     \hfill
%     \begin{subfigure}[b]{0.45\textwidth}
%         \centering
%         \includegraphics[width=\textwidth]{CS_maze.png}
%         \caption{Esimation error as a function of number of samples $K$. Estimation error is defined as $\|\widehat{\Theta}_h-\Theta^\star_h\|_\text{F}/\|\Theta^\star_h\|_\text{F}$}
%         \label{fig:3b}
%     \end{subfigure}
%     \caption{The five goal points for five tasks are (1,1), (2,2), (3,3), (4,4), (5,5) respectively.}
%     \label{fig:maze}
% \end{figure}
We evaluated our algorithm on a 5 $\times$ 5 grid maze navigation problem. The state space consists of $|\S| = 25$ grid cells, and the action space includes four moves (up, down, left, right). 
For each $(s,a)$ pair, we use the canonical features $\psi(s,a)=e_{i(s,a)}\in \mathbb{R}^d$ with $d=|\S|\times|\A|=100$, where $i(s,a)$ indexes the pair.
%We use canonical features $\psi(s,a)=e_{(s,a)}\in \mathbb{R}^d$ with $d=|\S|\times|\A|=100$. 
We considered $T=5$ tasks. Each task corresponds to a specific goal location, and the reward is based on the Manhattan distance to the goal point. We vary the sample sizes from $K=10$ to $K=2000$ in increments of 10. From Figures~\ref{fig:2b} and~\ref{fig:2c}, we observe that our proposed method succeed to learn an accurate representation in the grid environment.
\vspace{-2 mm}
\section{Conclusion and Future Work}\label{sec:Conclusion}
This paper investigated multi-task representation learning for reinforcement learning under a shared low-rank structure. 
We considered a linear reward model and proposed a computationally efficient approach that uses reward free RL for essential policy construction to guide data collection, combined with reward parameter estimation to jointly recover the reward matrix and learn an $\epsilon$-optimal policy.
%We proposed a computationally efficient approach that combines spectral initialization with task-wise least-squares estimation to jointly recover a common representation and task-specific parameters under general feature distributions. 
Our analysis showed that accurate low-rank recovery is achievable without relying on restrictive assumptions such as incoherence or Gaussian features, and we quantified the associated sample complexity. Numerical experiments validated the effectiveness of the proposed method, demonstrating improved representation accuracy and performance over baseline approaches on two MDP environments.
As part of future work, we will extend the proposed framework to widely used RL algorithms, like PPO and DQN.
% Some potential future works include decentralized setting.
%
%\vspace{-2 mm}
\bibliographystyle{IEEEtran}
\bibliography{ref.bib, bandits.bib}
\end{document}